\documentclass[letterpaper, 10 pt]{article} 
\usepackage{fullpage}

\usepackage{amsmath}
\usepackage{amssymb, amsfonts}
\usepackage{mathtools}        
\usepackage{bm}
\usepackage{mathrsfs}
\mathtoolsset{showonlyrefs=true}
\usepackage[round, sort&compress]{natbib}
\makeatletter
\let\proof\@undefined
\let\endproof\@undefined
\makeatother
\usepackage{amsthm}
\usepackage{cite}
\usepackage[dvipsnames]{xcolor}
\usepackage{graphicx}
\usepackage{booktabs,array,tabularx,multirow}
\usepackage{algorithm}
\usepackage{algpseudocode}
\usepackage{microtype}

\makeatletter
\let\labelindent\@undefined
\makeatother
\usepackage{enumitem}
\setlist{nosep}

\usepackage{hyperref}   
\usepackage{hyperref}
\hypersetup{
    colorlinks=true,
    linkcolor=magenta,
    filecolor=magenta,      
    urlcolor=Cerulean,
    pdftitle={Overleaf Example},
    citecolor=Cerulean,
    pdfpagemode=FullScreen,
    }

\let\cite\citep
\theoremstyle{definition}
\newtheorem{theorem}{Theorem}
\newtheorem{proposition}{Proposition}

\newtheorem{corollary}{Corollary}
\theoremstyle{definition}
\newtheorem{assumption}{Assumption}

\usepackage{xcolor}

\definecolor{bluefire}{HTML}{08AAE3}
\definecolor{vitaminc}{HTML}{FF9900}
\definecolor{AmazonGold}{HTML}{cea968}

\DeclareMathOperator*{\argmin}{arg\,min}
\DeclareMathOperator*{\argmax}{arg\,max}

\DeclareMathOperator{\KL}{KL}

\DeclareMathOperator*{\E}{\mathbb{E}}
\newcommand{\R}{\mathbb{R}}

\newcommand{\cX}{\mathcal{X}}
\newcommand{\cU}{\mathcal{U}}

\newcommand{\cB}{\mathcal{B}}
\newcommand{\cC}{\mathcal{C}}
\newcommand{\cD}{\mathcal{D}}
\newcommand{\cF}{\mathcal{F}}

\newcommand{\cK}{\mathcal{K}}
\newcommand{\cL}{\mathcal{L}}

\newcommand{\cR}{\mathcal{R}}

\newcommand{\cXsafe}{\mathcal{X}_s}
\newcommand{\pitype}[1]{\pi^{\mathrm#1}}
\newcommand{\pitask}{\pitype{t}}
\newcommand{\pibase}{\pitype{b}}
\newcommand{\pisafe}{\pitype{s}}

\newcommand{\marginfn}{g}

\newcommand{\classkl}{\alpha}
\newcommand{\entropyreg}{\kappa}
\newcommand{\rtask}{r}
\newcommand{\rteach}{\bar{r}}
\newcommand{\cLsafe}{\mathcal{L}_{\mathrm{s}}}

\newcommand{\Ir}{\mathrm{I}}
\newcommand{\dimresid}{n_r}
\newcommand{\scinot}[1]{\mathrm{e}^{\text{-}#1}}
\newif\ifnotes\notestrue

\title{\LARGE \bf
LIMBO: Learning and Internalizing Model-Free Barrier Objectives for Agile and Safe  Whole-Body Control
}

\author{
Jake Gonzales$^{1,2,\dagger}$,
Arturo Flores Alvarez$^{1,3,\dagger}$,
Yu-Ming Chen$^{1}$,\\[2pt]
Aaron D. Ames$^{1,4}$,
Lillian J. Ratliff$^{1,2,*}$,
and Manikantan Nambi$^{1,*}$
}

\begin{document}

\maketitle

\begingroup
\renewcommand{\thefootnote}{}
\footnotetext[0]{%
$^{1}$Amazon,
$^{2}$University of Washington, Seattle,
$^{3}$University of California, Los Angeles,
$^{4}$California Institute of Technology.}
\endgroup

\begingroup
\renewcommand{\thefootnote}{\fnsymbol{footnote}}
\footnotetext[2]{Work done while interning at Amazon.}
\footnotetext[1]{Equal advising.}
\endgroup

\thispagestyle{empty}
\pagestyle{plain}

\begin{abstract}
Safe whole-body control requires coordinating collision avoidance and balance under high-dimensional, nonlinear dynamics---making safety certificates difficult to design and reuse across behaviors. We present LIMBO, a framework for  synthesizing a state--action control barrier function and distilling its safety structure into a task policy. LIMBO learns the safety certificate from black-box transitions and a state-based failure specification over residual actions around a frozen base controller, making Q-CBF synthesis tractable in the full control dimension while placing the certificate in the task policy’s control space. During synthesis, the learned safety value drives risk-guided sampling near the estimated boundary of recoverability; during task learning, it serves as a teacher that provides action-level safety feedback, yielding a robust task policy and alleviating the need for an online safety filter at deployment. We demonstrate LIMBO on a 29-degree-of-freedom humanoid performing dodgeball avoidance and locomotion beneath low obstacles. Beyond scaling learned Q-CBFs to whole-body control, we show that risk-guided boundary sampling provides a theoretically grounded way to explore the edge of recoverability. Under the same safety specification, \emph{ceteris paribus}, varying the sampling concentration produces strategies ranging from crouching to a novel backward-leaning limbo maneuver. In both settings, the learned policies transfer to hardware without online safety filtering, showing that learned safety synthesis  scales to agile whole-body control. Website: \url{https://limbo-safe.wbc.com/}
\end{abstract}

\section{Introduction}
\label{sec:intro}

High-dimensional robotic systems such as humanoids are increasingly capable of agile and expressive whole-body behaviors \citep{yang2026pacmanperceptionawarecbfrlwholebody, cheng2024RSS, ji2024exbody2,he2024iros}. Safety at this scale is inherently coupled across many degrees of freedom, contact, balance, and nonlinear dynamics.
A motion that avoids one collision may destabilize the robot, create contact elsewhere on the body, or leave the system in an unrecoverable state. Safe whole-body control therefore requires reasoning about coordinated motion across the full system rather than any subsystem in isolation.

Control barrier functions (CBFs) provide a principled framework for safety-critical control by restricting actions to preserve a forward-invariant safe set~\citep{ames2019cbf}. A central challenge, however, is barrier function synthesis. Standard approaches typically rely on analytical barriers designed for particular constraints and operating conditions, together with sufficient knowledge of the system dynamics to enforce them. For complex robotic systems, this can require substantial modeling effort or reduced-order safety representations that omit dynamics important for agile motion. The resulting certificates may depend on expert system knowledge, privileged geometric information, and carefully chosen safety representations, making them difficult to scale or transfer.

\begin{figure}[t]
    \centering
    \includegraphics[width=0.5\columnwidth]{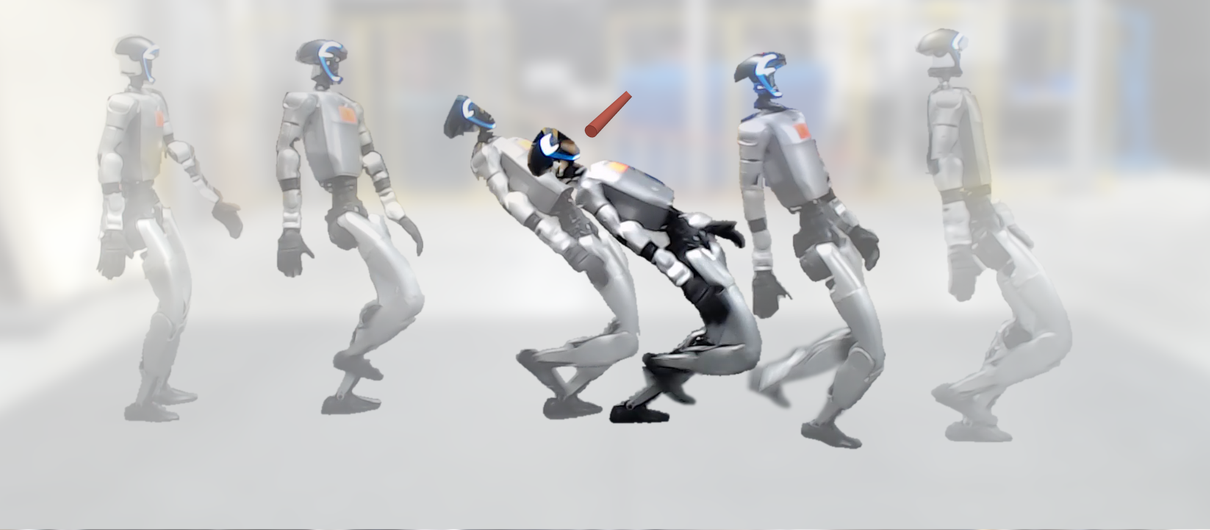}\includegraphics[width=0.5\columnwidth]{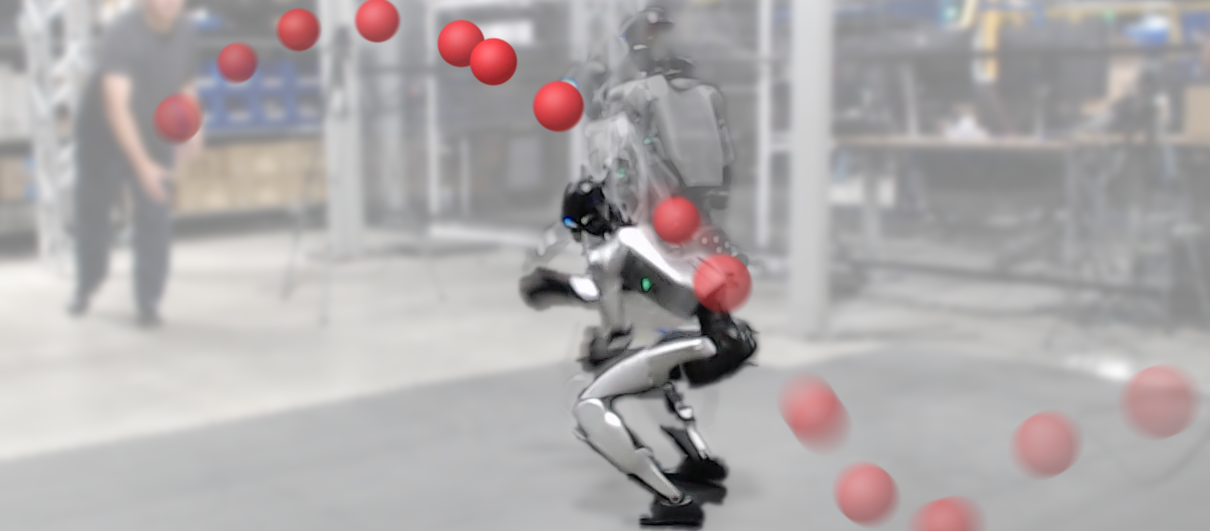}
    \caption{\textbf{Safety synthesized and distilled into agile behavior.} The deployed $29$-DoF humanoid performs backward-leaning limbo  (top) and dodgeball evasion (bottom) using policies trained from an automatically synthesized Q-CBF and subsequent safe policy distillation.}
    \label{fig:teaser}
\end{figure}

These limitations have motivated a broad class of learning-based approaches to safety-critical control, including methods that learn safety certificates, models, constraints, or safe policies directly from data~\citep{dawson2023safe}. Two directions are particularly relevant. The first seeks to \emph{learn the safety certificate itself}: Q-CBFs lift the barrier into state--action space and learn a safety value function that captures the future consequences of candidate actions from black-box system transitions~\citep{oh2025safety,oh2026robustqcbf}. 
Given a state-based safety margin that defines failure, the learned Q-CBF identifies actions from which safety can continue to be maintained. A second direction seeks to \emph{internalize safety into the task policy}. CBF-RL~\citep{yang2026cbfrl}, e.g., incorporates an analytical CBF into policy training via safety-filtered rollouts and CBF-guided objectives (albeit hand-crafted), allowing the learned policy to satisfy the prescribed safety constraints without requiring the filter at deployment. 

These approaches address complementary parts of the problem: Q-CBFs learn the safety certificate yet have primarily been used as runtime safety filters, whereas CBF-RL transfers safety into the policy yet relies on an analytical barrier specified \emph{a priori}. What is missing is a framework that can scale learned safety certificates to high-dimensional control and then distill that learned safety structure into a task policy.
LIMBO addresses this gap. 

\paragraph{Contributions.}
The contributions are summarized as follows:
\begin{itemize}[itemsep=5pt, topsep=5pt]
\item \textbf{Scalable Q-CBF synthesis \& policy distillation.}
Central to LIMBO is a residual Q-CBF formulation that places the learned certificate in the same control space as the task policy while retaining a frozen base controller for nominal stabilization or locomotion. This makes Q-CBF synthesis tractable over the full control dimension using only black-box transitions and a state-based failure specification. The learned safety value then serves as a training-time teacher, providing action-level Q-CBF violations and counterfactual corrections to guide the task policy to internalize safety without replacing its objective. This enables deployment without an online safety filter.

\item \textbf{Risk-guided discovery at the edge of recoverability.}
We formulate boundary-focused replay as a risk-seeking change of measure that controls how strongly synthesis explores the edge of recoverability. Under the same safety specification, stronger boundary concentration shifts the learned strategy from crouching to a novel backward-leaning limbo maneuver, without prescribing either behavior. To our knowledge, this is the first demonstration of humanoid limbo behavior emerging from safety synthesis and deployed on hardware.

\item \textbf{Agile whole-body sim-to-real control.}
LIMBO is  validated on a 29-degree-of-freedom humanoid performing dodgeball avoidance and locomotion beneath low obstacles (aka limbo). Both Q-CBF synthesis and
task-policy training are performed entirely in simulation, yet the resulting policies transfer to hardware without additional adaptation or online safety filtering, demonstrating that learned safety synthesis scales to robust whole-body control.

\end{itemize}

\section{Related Work }
\label{sec:related-work}
Safety-critical control provides principled tools for reasoning about failure and invariance, including viability theory, reachability analysis, and CBFs \citep{aubin:inria-00636570,BansalHJReachability2017,ames2019cbf}. Their practical use, however, depends on constructing a safety representation that is both tractable and expressive.
Reduced-order models and geometric constraints are common tools for making safety computation tractable in whole-body robotics, but they can exclude coupled effects that are important for agile motion. 
Learned certificates offer a different route by approximating Lyapunov functions, barriers, or safety values  from data \citep{dawson2023safe,so2024train}. 
Q-CBFs are especially relevant as they
lift safety into state--action space, so actions can be
evaluated via black-box transitions rather than an analytical dynamics model
\citep{oh2025safety,oh2026robustqcbf}.

A complementary line of work uses model-based or control-theoretic structure to guide policy learning rather than relying on it only at deployment~\citep{yang2026cbfrl, li2025clfrlcontrollyapunovfunction}. This is attractive for high-dimensional control: structured guidance can shape the learning problem while leaving a lightweight policy to execute the final behavior. CBF-RL applies this idea directly to safety, using an analytical CBF to filter training rollouts and provide corrective reward feedback that is progressively internalized by the policy \citep{yang2026cbfrl}. PAC-MAN brings this approach to humanoid dodgeball, demonstrating that barrier guidance can produce agile collision-avoidance behavior and transfer to hardware \citep{yang2026pacmanperceptionawarecbfrlwholebody}. While promising, these results assume a pre-specified barrier. Our framework closes this gap: the safety certificate is first synthesized from interaction and is then reused to supervise task policy training.

Recent learned whole-body controllers demonstrate that humanoids can
execute increasingly dynamic and expressive behaviors  \citep{cheng2024RSS,omnih2o2024,asap2025, beyondmimic2025}. Many such approaches rely on reference motion or imitation to organize the otherwise difficult whole-body learning problem. Adversarial motion priors (AMP), e.g., regularize a policy toward a distribution of plausible motion without requiring it to track a prescribed trajectory \citep{amp2021}. We use this separation deliberately: the motion prior regularizes \emph{how} the robot moves, while the learned Q-CBF determines which actions are compatible with safety. Consequently, safe behavior is not encoded by a reference trajectory or hand-designed barrier, allowing distinct whole-body responses to emerge under the same safety specification.

\begin{figure*}[t]
    \centering
    \includegraphics[width=1.0\textwidth]{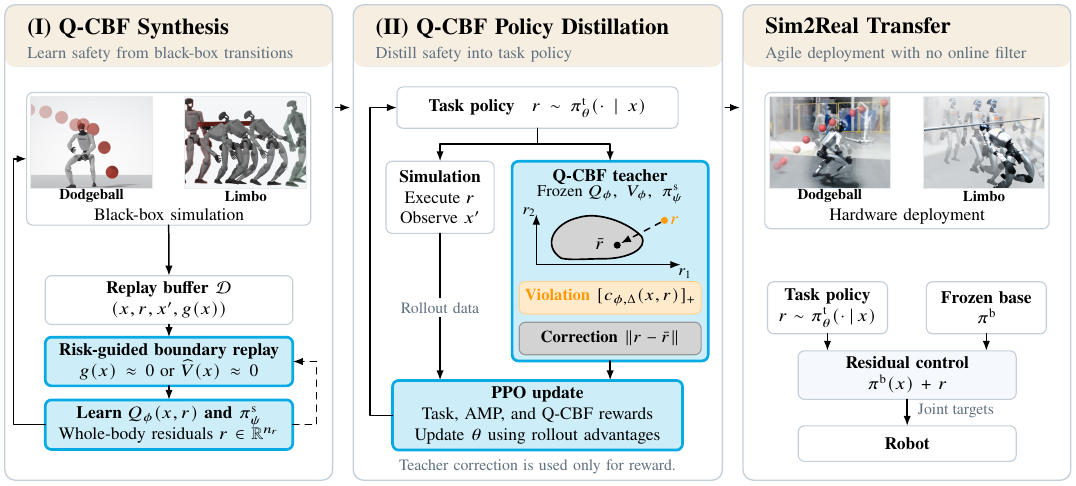}
    \caption{\textbf{LIMBO Overview.}~\textbf{\emph{Stage~I}}: Q-CBF synthesis via black-box  transitions using the residual formulation and risk-guided boundary replay. \textbf{\emph{Stage~II}}: learned Q-CBF is a \emph{safety teacher}, shaping task-policy rewards via constraint violations and corrections.  \textbf{\emph{Deployment}}: compose task policy with the frozen base controller, with optional filtering. Hardware demonstrations operate without online filtering, relying on the safety behavior internalized  during Stage~II.}
    \label{fig:training}
\end{figure*}

\section{Problem Formulation}
\label{sec:problem}
Consider a deterministic discrete-time system: 
\begin{equation}
\label{eq:system}
    x_{t+1}=f(x_t,u_t),
    \quad
    x_t\in\cX\subseteq\R^{n_x},
    \quad
    u_t\in\cU\subseteq\R^{n_u},
\end{equation}
where $\cU$ denotes the set of admissible control inputs. We do not assume that the dynamics $f$ are available analytically. Instead, given a state--action pair $(x,u)$ a simulator or the physical system provides the successor state $f(x,u)$. In particular, our formulation does not require derivatives of the dynamics or a control-affine model. 

Safety is specified through a bounded margin function $\marginfn:\cX \to \R$, whose sign determines whether a state satisfies the safety specification.
Let 
$\cXsafe := \{x\in\cX : \marginfn(x)\geq 0\}\subset \cX$ 
be the constraint-satisfying set. Thus, $\marginfn(x)$ represents a signed safety margin rather than a pre-specified barrier certificate. Multiple failure conditions can be combined via $\marginfn(x) = \min_{j\in[m]} \marginfn_j(x)$, where $[m] := \{1, \ldots, m\}$ indexes the $m$ component safety conditions.
The goal is to characterize, from black-box transitions, which actions preserve the safety specification $\marginfn$ under the system dynamics. We construct the corresponding state-space viability value and then lift it to state--action space to obtain a Q-CBF.

\subsection{Maximal Viability Value}
We construct the infinite-horizon safety value via a sequence of
finite-horizon problems. This establishes existence of the infinite-horizon value and identifies the solution of the undiscounted Bellman equation relevant
to viability.

\begin{assumption}[Regularity]
\label{ass:viability-regularity}
The admissible control set $\cU\subseteq\R^{n_u}$ is nonempty and
compact, $f\in\mathcal C(\cX\times\cU;\cX)$, and
$\marginfn\in\mathcal C(\cX;\R)$ is bounded.
\end{assumption}
For $H\geq0$, define the finite-horizon safety value
$V_H(x)
    :=
    \max_{\boldsymbol{u}\in\cU^H}
    \min_{0\leq k\leq H}\marginfn(x_k)$,
where
$\boldsymbol{u}=(u_0,\ldots,u_{H-1})$,
$x_0=x$, $x_{k+1}=f(x_k,u_k)$, and set $V_0(x):=\marginfn(x)$.
Define the Bellman operator
\begin{equation}
\label{eq:safety-bellman-operator}
    (\mathcal{T}V)(x)
    :=\textstyle
    \min\big\{
        \marginfn(x),
        \max_{u\in\cU}V\bigl(f(x,u)\bigr)\big\}.
\end{equation}
 Under Assumption~\ref{ass:viability-regularity}, the maximum is attained whenever $V\in \mathrm{USC}(\cX)$\footnote{Here $\mathrm{USC}(\cX)$ is the space of upper semicontinuous functions, and $\mathrm{USC}_b(\cX)$ are those that are additionally bounded.}, and  the finite-horizon values satisfy
$V_0=\marginfn$ and $V_{H+1}=\mathcal{T}V_H$.
The following proposition shows that this recursion admits a well-defined infinite-horizon limit.

\begin{proposition}[Infinite-horizon viability value]
\label{prop:viability-value}
Under Assumption~\ref{ass:viability-regularity}, for every $x\in\cX$,
\[V^\star(x):=\lim_{H\to\infty}V_H(x)
=\inf_{H\geq0}V_H(x)\in\mathbb{R}.\]
Moreover, $V^\star$ is upper semi-continuous and bounded (i.e., $V^\star\in\mathrm{USC}_b(\cX)$),  satisfies
$\mathcal TV^\star=V^\star$ and, pointwise,
\[V^\star=
\sup\bigl\{
W\in\mathrm{USC}_b(\cX):
\mathcal TW=W
\bigr\}.\] 
Finally,  
$\cC^\star(\cXsafe) := \{x\in\cX:V^\star(x)\geq0\}$ 
is the maximal controlled-invariant subset of $\cXsafe$.
\end{proposition}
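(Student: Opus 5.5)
The plan is to build everything from the monotone decreasing sequence $V_H$, in four steps.

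\emph{Step 1: existence and bounds.} Since $V_{H+1}(x)$ is a minimum over a longer horizon and a longer control sequence can always be truncated, we have $V_{H+1}\le V_H$. Concretely, for any $\boldsymbol u\in\cU^{H+1}$ the running minimum up to $H+1$ is at most the running minimum up to $H$ under the first $H$ controls. Also $V_H\ge \inf_{\cX}\marginfn>-\infty$ by boundedness of $\marginfn$, and $V_H\le \marginfn\le\sup\marginfn$. Hence the limit $V^\star=\inf_H V_H$ exists, is real, and is bounded.

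\emph{Step 2: upper semicontinuity.} Each $V_H$ is USC by induction on $V_{H+1}=\mathcal T V_H$: if $V$ is USC and bounded, then $(x,u)\mapsto V(f(x,u))$ is USC by continuity of $f$. A Berge-type argument then shows $x\mapsto\max_{u\in\cU}V(f(x,u))$ is USC. Take $x_n\to x$ with maximizers $u_n$, pass to a subsequence $u_n\to u$ using compactness of $\cU$, and apply $\limsup V(f(x_n,u_n))\le V(f(x,u))$. Finally, the minimum of two USC functions is USC. Since an infimum of USC functions is USC, $V^\star\in\mathrm{USC}_b(\cX)$.

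\emph{Step 3: fixed point (main obstacle).} This step interchanges the limit with the max. Monotonicity of $\mathcal T$ gives $\mathcal TV^\star\le \mathcal TV_H=V_{H+1}$ for all $H$, so $\mathcal TV^\star\le V^\star$. For the reverse inequality, fix $x$ and let $u_H$ attain $\max_u V_H(f(x,u))$. Then $V^\star(x)\le V_{H+1}(x)\le \min\{\marginfn(x),V_H(f(x,u_H))\}$. Extract $u_{H_j}\to \bar u$ by compactness. For each fixed $K$ and $H_j\ge K$, we have $V_{H_j}(f(x,u_{H_j}))\le V_K(f(x,u_{H_j}))$, and $\limsup_j V_K(f(x,u_{H_j}))\le V_K(f(x,\bar u))$ by the USC of $V_K\circ f(x,\cdot)$. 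Letting $K\to\infty$ yields $V^\star(x)\le\min\{\marginfn(x),V^\star(f(x,\bar u))\}\le(\mathcal TV^\star)(x)$. This Dini-type argument combining monotonicity, USC and compactness is exactly where Assumption~\ref{ass:viability-regularity} is essential.

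\emph{Step 4: maximality and invariance.} For maximality, let $W\in\mathrm{USC}_b$ with $\mathcal TW=W$. Then $W\le\marginfn=V_0$, and by induction $W=\mathcal TW\le\mathcal TV_H=V_{H+1}$, so $W\le V^\star$. Since $V^\star$ is itself such a fixed point, it equals the pointwise supremum.

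For invariance, set $\cC^\star=\{V^\star\ge0\}$. The fixed-point identity gives $V^\star\le\marginfn$, so $\cC^\star\subseteq\cXsafe$. At $x\in\cC^\star$, a maximizer $u$ (attained since $V^\star$ is USC) satisfies $V^\star(f(x,u))\ge V^\star(x)\ge0$, so $\cC^\star$ is controlled invariant.

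Conversely, let $S\subseteq\cXsafe$ be controlled invariant and $x\in S$. Choosing invariant-preserving controls for $H$ steps yields $\min_k\marginfn(x_k)\ge0$, so $V_H(x)\ge0$ for all $H$. Hence $V^\star(x)\ge0$ and $S\subseteq\cC^\star$, which proves maximality.
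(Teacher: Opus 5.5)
Your proof is correct and follows essentially the same route as the paper's: $V_H$ decreases monotonically, $V^\star$ is upper semicontinuous as an infimum, a convergent subsequence of maximizers $u_{H_j}\to\bar u$ combined with the fix-$K$-then-let-$K\to\infty$ (Dini-type) argument passes the limit through the maximization, $W=\mathcal T^H W\le V_H$ gives the greatest-fixed-point property, and the Bellman maximizer gives invariance. The differences are cosmetic. You get monotonicity by truncating control sequences, and you prove maximality of $\cC^\star(\cXsafe)$ directly from the trajectory definition. You also use only upper semicontinuity of each $V_H$ where the paper proves continuity, which shows your argument would still work if $\marginfn$ were merely upper semicontinuous.
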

\begin{proof}[Proof sketch]
Since $V_H=\mathcal T^H\marginfn$, monotonicity of $\mathcal T$ and
$\mathcal T\marginfn\leq\marginfn$ give
$V_H\downarrow V^\star$, with $V^\star$ bounded and upper
semicontinuous. Compactness of $\cU$ lets the Bellman maximization pass
to the limit, so $\mathcal T V^\star=V^\star$. Moreover, any bounded
upper semicontinuous fixed point $W$ satisfies
$W=\mathcal T^H W\leq V_H$ for all $H$, hence $W\leq V^\star$.
Finally, the Bellman maximizer makes $\{V^\star\geq0\}$ controlled
invariant, while every controlled-invariant subset of $\cXsafe$ has
$V_H\geq0$ for all $H$ and is therefore contained in
$\{V^\star\geq0\}$. A complete proof is reserved for an extended version.
\end{proof}
The full proof is given in Appendix~\ref{app:viability-proof}.
Accordingly, the set $\cC^\star(\cXsafe)$ is the viability kernel of $\cXsafe$. Thus $V^\star(x)\geq0$ if and only if safety is maintained indefinitely from $x$, while $V^\star(x)<0$ indicates that failure is unavoidable under every admissible control strategy.


\subsection{Q-CBF Formulation} 
\label{sec:qcbf}
To characterize the safety of individual actions, we lift the viability value to state--action space and define
\begin{equation}
\label{eq:qcbf-q}
    Q^\star(x,u)
    :=
    \min\left\{
        \marginfn(x),
        V^\star\bigl(f(x,u)\bigr)
    \right\}.
\end{equation}
Thus $\mathcal{T}V^\star = V^\star$ and \eqref{eq:qcbf-q} yield
    $V^\star(x)
    =
    \max_{u\in\cU} Q^\star(x,u)$.
Hence, $Q^\star(x,u)$ measures the worst safety margin achievable over the infinite future when action $u$ is applied at the current state and control is subsequently chosen optimally for safety.

Let $\classkl:[0,\infty)\to[0,\infty)$ be a class-$\cK$ function\footnote{A function $\classkl:[0,\infty)\to[0,\infty)$ is class-$\cK$ if it is continuous, strictly increasing, and satisfies $\classkl(0)=0$.} satisfying $\classkl(r)\leq r$ for all $r\geq0$. For
$x\in\cC^\star(\cXsafe)$, consider the Q-CBF admissible-action set as
\begin{equation}
    \cU_{Q}(x)
    :=
    \{
        u\in\cU :
        Q^\star(x,u)
        \geq
        \classkl\bigl(V^\star(x)\bigr)
 \}.
\end{equation}
As $x\in\cC^\star(\cXsafe)$ implies
$V^\star(x)\in[0,\marginfn(x)]$ and
$\classkl(V^\star(x))\leq V^\star(x)$, the constraint in
$\cU_{Q}(x)$ is equivalent to
$V^\star\bigl(f(x,u)\bigr)
    \geq
    \classkl\bigl(V^\star(x)\bigr)$.

The set $\cU_{Q}(x)$ is also nonempty for every
$x\in\cC^\star(\cXsafe)$. Indeed, since the maximum in
$V^\star(x)$ is attained, any
$u^\star(x)\in
    \argmax_{u\in\cU}Q^\star(x,u)$
satisfies
\[Q^\star(x,u^\star(x))
    =
    V^\star(x)
    \geq
    \classkl(V^\star(x)).\]
Furthermore, if $x_t\in\cC^\star(\cXsafe)$ and
$u_t\in\cU_{Q}(x_t)$, then
\[V^\star(x_{t+1})
    \geq
    \classkl\bigl(V^\star(x_t)\bigr)
    \geq 0,\]
and hence $x_{t+1}\in\cC^\star$. Therefore,
$\cC^\star(\cXsafe)$ is forward invariant under any control sequence satisfying
the Q-CBF constraint, and  $V^\star$ serves as a discrete time CBF whose zero-superlevel set is precisely the maximal controlled-invariant safe set.

\subsection{Scaling via Residual Control}
To support safety learning in high-dimensional systems, we parameterize
control through a residual $r_t\in\mathcal R\subseteq\R^{\dimresid}$ around
a frozen base policy $\pibase$ that provides nominal stabilization or
locomotion. This supplies a control prior: $r=0$ recovers the nominal
behavior, and appropriately scaled residual exploration can focus on
modifications needed to satisfy the safety specification. The base
policy remains active throughout the resulting motion, so the Q-CBF
evaluates residual actions together with the base controller's feedback
response. Using the same residual interface for synthesis and downstream
task control enables task policy supervision (cf.~Section~\ref{sec:safety-distillation}).

Let $\Gamma(x,r)$ compose the base-policy output and residual into an
admissible system input, with $\Gamma(x,0)$ recovering the base.
Absorbing this composition into the dynamics, we have
\begin{equation}
\label{eq:whole-body-dynamics}
    x_{t+1}=f_{\mathrm{res}}(x_t,r_t),
    \quad
    f_{\mathrm{res}}(x,r):=f\bigl(x,\Gamma(x,r)\bigr).
\end{equation}
Holding $\pibase$ and the command interface fixed keeps these effective dynamics unchanged across synthesis, policy training, and evaluation. Under Assumption~\ref{ass:viability-regularity} for $f_{\mathrm{res}}$
and $\mathcal R$, the preceding construction yields
$Q^\star:\cX\times\mathcal R\to\R$; its maximal safe set is relative
to these admissible residual controls.

For example, in humanoid whole-body control, we use $\dimresid=29$ and
$q_t^{\mathrm{des}}=\Gamma(x_t,r_t)
    =q^{\mathrm{def}}+S\bigl(\pibase(x_t)+r_t\bigr)$,
where $q^{\mathrm{def}}$ is the nominal joint configuration and
$S\in\R^{\dimresid\times \dimresid}$ is a fixed diagonal scaling matrix.
Our experiments demonstrate Q-CBF synthesis and policy distillation
over this full residual action space, capturing coordinated effects
across the robot's limbs, contacts, and floating base.

\section{Learning High-Dimensional  Q-CBFs}
\label{sec:qcbf-learning}
The formulation in Section~\ref{sec:qcbf} characterizes the ideal infinite-horizon safety value $Q^\star$, which is generally intractable for high-dimensional dynamics. We therefore approximate it directly from black-box transitions of the effective system in \eqref{eq:whole-body-dynamics}. In particular, we learn a state--action safety critic $Q_\phi:\cX\times \mathcal{R} \to\R$ together with a safety policy $\pisafe_\psi:\cX\to\cU$ that approximately maximizes the learned safety value.

\subsection{Discounted Q-CBF Synthesis}
Directly learning the undiscounted recursion in~\eqref{eq:safety-bellman-operator} may be difficult in practice. We instead use a discounted safety Bellman recursion with discount factor $\gamma\in [0,1)$, and increase $\gamma$ toward one during training. For a transition $(x_t,r_t,x_{t+1})$, we construct the target 
\begin{equation}
\label{eq:qcbf-target}
    y_t
    =
    (1-\gamma)\marginfn_t
    +
    \gamma
    \min\left\{
        \marginfn_t,
        Q_{\bar{\phi}}
        \bigl(
            x_{t+1},
            \pisafe_\psi(x_{t+1})
        \bigr)
    \right\},
\end{equation}
where $\marginfn_t=\marginfn(x_t)$ and $Q_{\bar{\phi}}$ denotes a target critic. For terminal transitions, we set $y_t=\marginfn(x_t)$. The corresponding learned state safety value is $V_\phi(x) := \mathbb{E}_{r\sim\pisafe_\psi} Q_\phi(x,r)$. Unlike a conventional reward Bellman backup, the  target recursion \eqref{eq:qcbf-target} contains only the safety margin $\marginfn$. In particular, using the current-state margin inside the backup preserves the ordering $Q(x,u)\leq\marginfn(x)$ at the Bellman fixed point so that states with $\marginfn(x) < 0$ cannot belong to the zero super-level set of the resulting safety value. As $\gamma \to 1$, the discounted recursion approaches the undiscounted recursion \eqref{eq:safety-bellman-operator} with $V^\star$, connecting the learned critic to $\cC^\star(\cXsafe)$. 
We thus use 
$\cC_\phi
    :=
    \{x\in\cX:V_\phi(x)\geq0\}$
as the learned approximation of the maximal controlled-invariant set $\cC^\star(\cXsafe)$. 

\subsection{Learning Procedure}
\label{sec:qcbf-learning-procedure}
We learn $Q_\phi$ from a replay buffer $\cD$ of black-box transitions using the Bellman target in \eqref{eq:qcbf-target}. 
We maintain an ensemble of $n$ critics and define the learned safety value as their pointwise minimum, $Q_\phi(x,r):=\min_{1\leq j\leq n}Q_{\phi_j}(x,r)$. Each critic is trained against the same target by minimizing
\begin{equation}
\label{eq:critic-loss}
\cL_Q(\phi)
=
\E_{(x,r,x')\sim\cD}
\left[
    \frac{1}{n}\sum_{j=1}^{n}
    \bigl(Q_{\phi_j}(x,r)-y\bigr)^2
\right].
\end{equation}
Expectations over $\cD$ follow the replay sampling rule described below. The target parameters are updated by Polyak averaging to stabilize the bootstrapped targets, and $Q_{\bar\phi}$ denotes the pointwise minimum of the corresponding target critics. 

The safety policy $\pisafe_\psi$ is trained to approximate the maximizing action in $V^\star(x)
    =
    \max_{u\in\cU} Q^\star(x,u)$. Its core objective combines the learned safety value with entropy regularization,
\begin{equation}
\label{eq:safety-policy-loss}
\cLsafe(\psi)
=
-\E_{\substack{x\sim\cD\\r\sim\pisafe_\psi(\cdot\mid x)}}
\bigl[
    Q_\phi(x,r)
    -
    \entropyreg\log\pisafe_\psi(r\mid x)
\bigr],
\end{equation}
where $\entropyreg\geq0$ controls entropy regularization. The entropy term encourages exploration during synthesis, while the Q-value objective favors actions with high predicted recoverability. Entropy regularization enters only the policy objective and leaves the safety Bellman target unchanged. Together, the critic and safety policy approximate the state--action safety value and maximizing control action  in the viability recursion.

\subsection{Risk-guided boundary exploration}
Learning the Q-CBF requires accurate value estimates near the boundary between recoverable and unrecoverable states, i.e., near $V^\star(x)=0$. Nominal rollouts tend to concentrate well inside the safe region and provide little information about the boundary. We formulate  boundary sampling as a regularized risk-seeking change of measure. Let $\mu$ denote a reference distribution over states in the replay buffer and define the boundary utility $b_\phi(x) := -| V_\phi(x)|$. We choose the replay distribution as
\begin{equation}
\label{eq:boundary-tilt}
\rho_\beta
\in\textstyle
\argmax_{\rho\ll\mu}
\left\{
\mathbb{E}_{X\sim\rho}[b_\phi(X)]
-
\tfrac{1}{\beta}
\KL(\rho|\mu)
\right
\},
\end{equation}
so that
$
\tfrac{d\rho_\beta}{d\mu}(x)
=
\exp(-\beta| V_\phi(x)|)/
\mathbb{E}_{X\sim\mu}[
\exp(-\beta| V_\phi(X)|)]$ is the unique optimizer.
Thus, $\beta$ controls a principled tradeoff between broad coverage and boundary-focused sampling: as $\beta\to0$, $\rho_\beta$ approaches the reference distribution $\mu$, while increasing $\beta$ concentrates samples near the estimated viability boundary $V_\phi(x)=0$. This is the upper-entropic, or risk-seeking, tilt of the boundary utility $b_\phi$.

\begin{theorem}[Concentration under boundary tilting]
\label{thm:boundary-tilting}
Fix $\phi$, let $b_\phi$ be bounded, and let $\mu$ be a
reference distribution. Define
$d\rho_\beta/d\mu\propto e^{\beta b_\phi}$ for $\beta\geq0$
and $b_\phi^\star:=\operatorname*{ess\,sup}_{X\sim\mu}b_\phi(X)$.
For any $0<\delta<\varepsilon$,
$
\rho_\beta(b_\phi\leq b_\phi^\star-\varepsilon)
\leq
e^{-\beta(\varepsilon-\delta)}/
\mu(b_\phi\geq b_\phi^\star-\delta)$.
Thus, replay concentrates exponentially on states with
near-maximal boundary utility as $\beta\to\infty$.
For every finite $\beta$, $\rho_\beta$ and $\mu$ have
the same support.
\end{theorem}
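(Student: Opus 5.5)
The plan is a Chernoff-type comparison of the numerator and denominator of the tilted measure. Write $Z_\beta := \mathbb{E}_{X\sim\mu}[e^{\beta b_\phi(X)}]$, so that $\rho_\beta(A) = Z_\beta^{-1}\int_A e^{\beta b_\phi}\,d\mu$ for every measurable $A$. Because $b_\phi$ is bounded, $e^{\beta b_\phi}$ lies between two positive constants. Hence $0<Z_\beta<\infty$ and the tilt is well defined for every $\beta\geq 0$. The essential supremum $b_\phi^\star$ is finite, and by the definition of essential supremum, $\mu(b_\phi\geq b_\phi^\star-\delta)>0$ for every $\delta>0$. This ensures the right-hand side of the claimed bound is finite.

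\textbf{Step 1 (numerator).} On the set $A_\varepsilon:=\{b_\phi\leq b_\phi^\star-\varepsilon\}$ we have $e^{\beta b_\phi}\leq e^{\beta(b_\phi^\star-\varepsilon)}$. Therefore
\[
\int_{A_\varepsilon} e^{\beta b_\phi}\,d\mu \leq e^{\beta(b_\phi^\star-\varepsilon)}\mu(A_\varepsilon)\leq e^{\beta(b_\phi^\star-\varepsilon)}.
\]

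\textbf{Step 2 (denominator).} Restrict $Z_\beta$ to the set $B_\delta:=\{b_\phi\geq b_\phi^\star-\delta\}$. Since the integrand is nonnegative,
\[
Z_\beta\geq \int_{B_\delta}e^{\beta b_\phi}\,d\mu\geq e^{\beta(b_\phi^\star-\delta)}\mu(B_\delta).
\]

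\textbf{Step 3 (combine).} Dividing the bound of Step 1 by that of Step 2, the factor $e^{\beta b_\phi^\star}$ cancels, and we obtain
\[
\rho_\beta(A_\varepsilon)\leq e^{-\beta(\varepsilon-\delta)}/\mu(B_\delta).
\]
Since $\delta<\varepsilon$ and $\mu(B_\delta)$ does not depend on $\beta$, the bound decays exponentially as $\beta\to\infty$. This is the claimed concentration on near-maximal boundary utility.

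\textbf{Step 4 (support).} The density $d\rho_\beta/d\mu=e^{\beta b_\phi}/Z_\beta$ is strictly positive and finite everywhere. Thus $\rho_\beta\ll\mu$ by construction. Conversely, $\mu\ll\rho_\beta$ with density $Z_\beta e^{-\beta b_\phi}$, which is also finite because $b_\phi$ is bounded. So the two measures are mutually absolutely continuous, have the same null sets, and therefore have the same support.

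The only point needing care is the positivity of $\mu(B_\delta)$, which follows directly from the definition of essential supremum. This is also why the statement requires $\delta>0$ rather than $\delta=0$: the level $b_\phi^\star$ itself may have $\mu$-measure zero. I expect no real obstacle beyond this; boundedness of $b_\phi$ is used only to make $Z_\beta$ and $b_\phi^\star$ finite.
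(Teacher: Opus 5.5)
Your proposal is correct and follows the paper's proof essentially step for step. The proof uses the same sets $A_\varepsilon$ and $B_\delta$, the same numerator and denominator bounds with cancellation of $e^{\beta b_\phi^\star}$, and the same essential-supremum argument for $\mu(B_\delta)>0$. The only cosmetic difference is that you get $\mu\ll\rho_\beta$ from the explicit density $Z_\beta e^{-\beta b_\phi}$, where the paper argues that a strictly positive integrand can integrate to zero only on a $\mu$-null set.
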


\begin{proof}[Proof sketch]
The unnormalized mass of
$\{b_\phi\leq b_\phi^\star-\varepsilon\}$ is at most
$e^{\beta(b_\phi^\star-\varepsilon)}$, while
$\E_\mu[e^{\beta b_\phi}]\geq
\mu(b_\phi\geq b_\phi^\star-\delta)
e^{\beta(b_\phi^\star-\delta)}$. By the definition of $\operatorname{ess\,sup}$, $\mu(b_\phi\geq b_\phi^\star-\delta)>0$ for every $\delta>0$.
Taking the ratio proves
the bound. Finally, $e^{\beta b_\phi}/\E_\mu[e^{\beta b_\phi}]>0$ for finite $\beta$, so $\rho_\beta$ and $\mu$ have the same support. 
\end{proof}
The full proof is provided in Appendix~\ref{app:risk_proofs}.

The replay construction accommodates different approximations
of the safety value; e.g., given candidate residuals
$r_j\sim\pisafe_\psi(\cdot\,|\, x)$, a target-critic estimate is
$\widehat V_{\bar\phi}(x)
:=\max_{1\leq j\leq K}Q_{\bar\phi}(x,r_{j})$.
Using $b(x)=-|\widehat V_{\bar\phi}(x)|$ focuses replay
near the estimated recoverability boundary. The zeroth-order
approximation $V_0=\marginfn$ instead gives
$b_0(x)=-|\marginfn(x)|$, emphasizing the instantaneous
safety boundary. In Sec.~\ref{sec:limbo}, we show that sweeping $\beta$ and changing the tilting utility changes which safe behavior is discovered.

\begin{algorithm}[t]
\small
\caption{LIMBO: Synthesis \& Policy Distillation}
\label{alg:qcbf-distillation}
\begin{algorithmic}[1]
\Require  $\pibase$, $\marginfn$,
 $\Delta\geq0$, replay rule $\mathsf{Sample}$
\State Initialize $Q_\phi$, $\pisafe_\psi$, $\bar\phi\gets\phi$,
and replay buffer $\cD$

\Statex \textbf{Stage I: Q-CBF synthesis}
\For{$k=1,\ldots,K$}
    \State Collect transitions under $\pisafe_\psi$ into $\cD$
    \State $\cB\gets\mathsf{Sample}(\cD)$ and $r'\gets\pisafe_\psi(x')$
    \State $y\gets(1-\gamma_k)\marginfn(x)
    +\gamma_k\min\{\marginfn(x),Q_{\bar\phi}(x',r')\}$
    \State Update $\phi,\psi$ using \eqref{eq:critic-loss}
    and \eqref{eq:safety-policy-loss}; set $\bar\phi\gets(1-\tau)\bar\phi+\tau\phi$
\EndFor
\State Freeze $Q_\phi,\pisafe_\psi$; set
$V_\phi(x):=Q_\phi(x,\pisafe_\psi(x))$

\Statex \textbf{Stage II: Q-CBF policy distillation (QCBF-PD)}
\State Initialize $\pitask_\theta$ and AMP discriminator
\For{each policy update}
    \State Collect  rollout via $\pitask_\theta$, and 
 teacher corrections $\rteach$
    (Sec.~\ref{sec:qcbf-teacher})
    \State Form rewards $R_t$ using \eqref{eq:qcbf-guided-policy-objective}
    \State Update $\theta$ with PPO using rollout advantages
    \State Train discriminator on rollout and reference data
\EndFor
\State \Return $\pitask_\theta$
\end{algorithmic}
\end{algorithm}

\section{Q-CBF Guided Policy Training}
\label{sec:safety-distillation} 

A learned Q-CBF can be used as a runtime safety filter, but the task policy may still propose unsafe actions and depend on the filter for correction. Instead, LIMBO uses the learned Q-CBF as a \emph{training-time safety teacher}, transferring its safe-action structure into the task policy.
The teacher evaluates the policy's proposal and provides a corrective reference anchored to that proposal, guiding policy optimization toward the learned safety condition while preserving the task objective. This differs from imitating $\pisafe_\psi$, which is trained to maximize the learned safety value rather than accomplish the task.

\subsection{Q-CBF Teacher}
\label{sec:qcbf-teacher}
Let $\pitask_\theta(\cdot\,|\, x)$ be a task policy that acts via a residual around the base policy $\pibase$;  at state $x$,  $r\sim\pitask_\theta(\cdot\,|\, x)\in \mathcal R$  is the proposed admissible residual action. The safety critic $Q_\phi$ and  policy
$\pisafe_\psi$ remain fixed throughout this stage. 

Let $\alpha:\mathbb{R}\to\mathbb{R}$ be a class $\mathcal{K}$ function satisfying $\alpha(v)\leq v$ for $v\geq 0$. 
Fix $\Delta\geq0$ and define the learned Q-CBF constraint residual
$c_{\phi,\Delta}(x,r)
:=\alpha(V_\phi(x))+\Delta-Q_\phi(x,r)$, and admissible set $\mathcal{A}_{\Delta}(x):=\{r\in \mathcal{R}: c_{\phi,\Delta}(x,r)\leq0\}$. 
When $\mathcal{A}_{\Delta}(x)\neq \emptyset$, the ideal Q-CBF teacher selects a minimum-deviation correction
\begin{equation}
\label{eq:qcbf-training-filter}
\bar r \in \textstyle\argmin_{\tilde r\in\mathcal{A}_{\Delta}(x)}\{ \tfrac{1}{2}\,\|\tilde r-r\|_W^2 \},
\end{equation}
where $W\succ0$ weights deviations in the residual action space. An admissible proposal is unchanged. A positive $\Delta$ requires additional Q-CBF slack that can accommodate approximation and distillation errors under the conditions of Section~\ref{sec:qcbf-safety-transfer}. Feasibility must be checked for the chosen margin.

For the untightened condition, the safety-policy action is feasible whenever $V_\phi(x)\geq0$. If no admissible correction is found, the safety-policy action is used only as a best-effort reference. In the following subsection, we use both the Q-CBF violation of $\rtask$ and the correction $\rteach-\rtask$ as supervisory signals to distill this safe-action structure into the task policy.

\subsection{Safety Distillation}
\label{sec:qcbf-distillation}
During Stage~II, the system uses the task-policy residual $r$ composed with the frozen base policy $\pibase$, not the teacher action $\bar r$. The correction is therefore \emph{counterfactual}: it provides supervision without intervening, so the policy learns from the consequences of its actions.

The task policy retains its nominal task reward, physical penalties, and adversarial motion prior. We augment these objectives with feedback
on both violation of the learned Q-CBF condition and the magnitude of the teacher correction. Specifically, we define the reward
\begin{equation}
\label{eq:qcbf-distillation-reward}
R_{\mathrm{Q},t}
=\max\{-C,\,-[c_{\phi,\Delta}(x_t,r_t)]_+  +e^{-\|r_t-\bar r\|_W^2}/\sigma_r^2-1\},
\end{equation}
where $[z]_+:=\max\{z,0\}$. The  term $-[c_{\phi,\Delta}(x_t,r_t)]_+$ penalizes only violations by the unfiltered task proposal, whereas $\exp\bigl(-\|r_t-\bar r\|_W^2/\sigma_r^2\bigr)-1$ penalizes the amount of
correction indicated by the teacher, vanishing when $\bar r=r_t$ and approaching $-1$ for large deviations. The parameter $\sigma_r>0$ sets the action-deviation scale of this penalty; increasing it makes the penalty less sensitive to a given correction. The clipping constant $C>0$ bounds the combined reward to $[-C,0]$, limiting the influence of large violations. 

The two penalties are clipped jointly and share a single reward weight. Together, they provide state-dependent feedback on the task policy's proposed residual via the learned Q-CBF condition and the teacher correction. During training, this feedback is applied during active obstacle encounters, with $R_{\mathrm{Q},t}=0$ otherwise.

The PPO per-step reward is
\begin{equation}
\label{eq:qcbf-guided-policy-objective}
R_t=R_{\mathrm{task},t}
+\Delta t\bigl(w_{\mathrm{Q}}R_{\mathrm{Q},t}
+\lambda_s R_{\mathrm{AMP},t}\bigr),
\end{equation}
where $R_{\mathrm{task},t}$ includes the task and physical-control rewards, $R_{\mathrm{AMP},t}$ encourages agreement with reference motions, and $\Delta t$ is the control period. The Q-CBF feedback enters the scalar reward before PPO advantage estimation, encouraging the policy to select actions that satisfy the learned safety condition in the current state.

\subsection{Transferring Safety to the Task Policy}
\label{sec:qcbf-safety-transfer}

The losses above encourage the task policy to reproduce Q-CBF-admissible actions, but successful distillation on sampled training states does not by itself establish safety of the unfiltered policy. We next quantify when proximity to a safe teacher action is sufficient to inherit the Q-CBF condition.

Define the exact Q-CBF admissible residual set \begin{equation} 
\label{eq:exact-qcbf-residual-set} \cR^\star_{\mathrm{QCBF}}(x) := \left\{ r\in\cR: Q^\star(x,r) \geq \classkl\bigl(V^\star(x)\bigr) \right\}.
\end{equation}

\begin{proposition}[Safety transfer] 
\label{prop:qcbf-teacher-transfer} 
Fix $x\in\cC^\star(\cXsafe)$ and let $Q^\star(x,\cdot)$ be $L_Q$-Lipschitz continuous on $\cR$. Consider a teacher action $\rteach\in \mathcal{R}$ satisfying 
$Q^\star(x,\rteach) \geq \classkl\bigl(V^\star(x)\bigr) + \Delta$ for  $\Delta>0$. Then $\{r\in \mathcal{R}:\, \|\rtask-\rteach \| \leq \sigma/L_Q\}\subset \cR^\star_{\mathrm{QCBF}}(x)$.
\end{proposition}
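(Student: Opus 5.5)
The statement contains two notational slips, which I will resolve first. The radius $\sigma$ should be the margin $\Delta$ from the hypothesis, since $\Delta$ is the only available quantity on the scale of $Q^\star$-values. The free variable inside the set should be the generic residual $r$, so that $\rtask$ plays the role of $r$. The claim I will prove is therefore
\[
\{r\in\cR:\ \|r-\rteach\|\leq \Delta/L_Q\}\subset\cR^\star_{\mathrm{QCBF}}(x).
\]
The approach is a one-step Lipschitz perturbation argument: the teacher satisfies the Q-CBF inequality with slack $\Delta$, and Lipschitz continuity of $Q^\star(x,\cdot)$ controls how much of that slack a nearby action can lose.

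First, fix an arbitrary $r\in\cR$ with $\|r-\rteach\|\leq\Delta/L_Q$. Since $Q^\star(x,\cdot)$ is $L_Q$-Lipschitz on $\cR$ and both $r$ and $\rteach$ lie in $\cR$,
\[
Q^\star(x,r)\;\geq\; Q^\star(x,\rteach)-L_Q\|r-\rteach\|\;\geq\; Q^\star(x,\rteach)-\Delta.
\]
Second, I chain this with the teacher hypothesis $Q^\star(x,\rteach)\geq\classkl(V^\star(x))+\Delta$ to obtain $Q^\star(x,r)\geq\classkl(V^\star(x))$. By \eqref{eq:exact-qcbf-residual-set}, this says exactly that $r\in\cR^\star_{\mathrm{QCBF}}(x)$. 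Since $r$ was arbitrary in the ball, the inclusion follows.

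Third, I will note that the hypothesis $x\in\cC^\star(\cXsafe)$ guarantees $V^\star(x)\geq0$, so $\classkl(V^\star(x))$ is well defined with $\classkl$ class-$\cK$ on $[0,\infty)$. It then follows, as in Section~\ref{sec:qcbf}, that any $r$ in the ball yields $V^\star(f_{\mathrm{res}}(x,r))\geq\classkl(V^\star(x))\geq0$. Hence the successor state remains in $\cC^\star(\cXsafe)$, which is the safety-transfer interpretation of the result.

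There is no substantive obstacle; the argument is a single inequality. The only care needed is to get the direction of the Lipschitz bound right and to confirm that the radius matches the slack $\Delta$. If $L_Q=0$, the bound holds trivially for every $r\in\cR$, so the convention $\Delta/0=\infty$ is harmless.
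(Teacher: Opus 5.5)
Your proof is correct and is the same argument the paper uses: the single Lipschitz inequality $Q^\star(x,r)\geq Q^\star(x,\rteach)-L_Q\|r-\rteach\|\geq\classkl(V^\star(x))$. Your reading of the undefined $\sigma$ as $\Delta$, giving radius $\Delta/L_Q$, is what the paper intends, as its later condition $\Delta\geq L_Q\varepsilon_\pi$ confirms; your remark that the successor state stays in $\cC^\star(\cXsafe)$ is a correct addition beyond what the proposition asserts.
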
 
\begin{proof}[Proof sketch] Lipschitz continuity implies that  \[Q^\star(x,\rtask) \geq Q^\star(x,\rteach) - L_Q \left\|\rtask-\rteach \right\| \geq \classkl\bigl(V^\star(x)\bigr)\] so  $\rtask\in\cR^\star_{\mathrm{QCBF}}(x)$. \end{proof}

Proposition~\ref{prop:qcbf-teacher-transfer} gives the teacher slack a direct interpretation as a tolerance to distillation error. Actions lying deeper inside the Q-CBF admissible set can be imitated less precisely while still satisfying the safety condition.

The practical teacher is constructed from learned values, so we also
account for approximation error.
\begin{corollary}[Safety transfer with a learned Q-CBF]
\label{cor:learned-qcbf-teacher-transfer}
Fix $x\in\cC^\star(\cXsafe)$, and assume $\classkl(\cdot)$ and $Q^\star(x,\cdot)$ are $L_\classkl$ and $L_Q$ Lipschitz continuous, respectively.
Suppose $|V_\phi(x)-V^\star(x)|\leq\varepsilon_V$ and
$|Q_\phi(x,r)-Q^\star(x,r)|\leq\varepsilon_Q$ uniformly in $r$. 
 Let  
\eqref{eq:qcbf-training-filter} return $\rteach\in \mathcal{R}$, and consider a task-policy
proposal $\rtask\in\{r'\in \mathcal{R}:
\|\rtask-\rteach\|\leq\varepsilon_\pi\}$. Then
\begin{equation}
\label{eq:learned-qcbf-transfer-margin}
Q^\star(x,\rtask)-\classkl\bigl(V^\star(x)\bigr)
\geq
\Delta-\varepsilon_Q-L_\classkl\varepsilon_V-L_Q\varepsilon_\pi,
\end{equation}
so that 
$
\Delta
\geq
\varepsilon_Q+L_\classkl\varepsilon_V+L_Q\varepsilon_\pi$ implies $\rtask\in\cR^\star_{\mathrm{QCBF}}(x)$.
\end{corollary}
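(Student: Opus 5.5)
The plan is a chain of three one-sided inequalities. Each one trades an exact quantity for its learned counterpart, or one action for the other, at a cost of one error term.

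\textbf{Step 1: what the teacher guarantees.} Since $\rteach$ is returned by \eqref{eq:qcbf-training-filter}, it lies in $\mathcal{A}_{\Delta}(x)$. Hence $c_{\phi,\Delta}(x,\rteach)\leq 0$, i.e.
\[
Q_\phi(x,\rteach)\geq \classkl\bigl(V_\phi(x)\bigr)+\Delta .
\]
This is the only property of $\rteach$ we use. In particular, no optimality of the minimum-deviation correction is needed.

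\textbf{Step 2: pass from learned to exact values at $\rteach$.} Uniform closeness of the critics gives $Q^\star(x,\rteach)\geq Q_\phi(x,\rteach)-\varepsilon_Q$.

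Lipschitz continuity of $\classkl$ together with $|V_\phi(x)-V^\star(x)|\leq\varepsilon_V$ gives $\classkl(V_\phi(x))\geq \classkl(V^\star(x))-L_\classkl\varepsilon_V$.

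Combining these with Step 1 yields
\[
Q^\star(x,\rteach)\geq \classkl\bigl(V^\star(x)\bigr)+\Delta-\varepsilon_Q-L_\classkl\varepsilon_V .
\]

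\textbf{Step 3: move from $\rteach$ to $\rtask$.} This is exactly the argument of Proposition~\ref{prop:qcbf-teacher-transfer}. By $L_Q$-Lipschitz continuity of $Q^\star(x,\cdot)$ and $\|\rtask-\rteach\|\leq\varepsilon_\pi$, we have $Q^\star(x,\rtask)\geq Q^\star(x,\rteach)-L_Q\varepsilon_\pi$. Subtracting $\classkl(V^\star(x))$ gives \eqref{eq:learned-qcbf-transfer-margin}. If $\Delta$ dominates the sum of the three error terms, the right-hand side is nonnegative, so $\rtask\in\cR^\star_{\mathrm{QCBF}}(x)$ by \eqref{eq:exact-qcbf-residual-set}.

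\textbf{Main obstacle: domain bookkeeping for $\classkl$.} The class-$\mathcal K$ function is defined on $[0,\infty)$ (or on $\mathbb{R}$ in the Section~\ref{sec:qcbf-teacher} convention). Since $x\in\cC^\star(\cXsafe)$, we know $V^\star(x)\geq0$, but $V_\phi(x)$ may be slightly negative. I would use the extended definition $\classkl:\mathbb{R}\to\mathbb{R}$ from Section~\ref{sec:qcbf-teacher} and require the Lipschitz bound on an interval containing both $V_\phi(x)$ and $V^\star(x)$. Alternatively, one can extend $\classkl$ by $\classkl(v)=0$ for $v<0$, which preserves the Lipschitz constant.

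A second, smaller point is feasibility: the statement presupposes that the filter returns some $\rteach$, so $\mathcal{A}_\Delta(x)\neq\emptyset$ is implicit. I would state this explicitly rather than derive it. With these conventions fixed, the rest is the routine chaining above.
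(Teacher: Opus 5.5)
Your proposal is correct and matches the paper's proof: the paper chains the same four inequalities (the Lipschitz step in $r$, the critic error $\varepsilon_Q$, teacher feasibility $c_{\phi,\Delta}(x,\rteach)\leq 0$, and the $L_\classkl$-Lipschitz bound on $\classkl$), only applying the Lipschitz-in-$r$ step first rather than last. Your remarks on the domain of $\classkl$ when $V_\phi(x)<0$ and on the implicit feasibility of $\mathcal{A}_\Delta(x)$ are sound bookkeeping points that the paper leaves unstated.
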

\begin{proof}
Feasibility of \eqref{eq:qcbf-training-filter}, the
approximation bounds, and Lipschitz continuity yields 
\begin{align*}
Q^\star(x,\rtask)
&\geq Q^\star(x,\rteach)-L_Q\varepsilon_\pi \\
&\geq Q_\phi(x,\rteach)-\varepsilon_Q-L_Q\varepsilon_\pi \\
&\geq \alpha(V_\phi(x))+\Delta
      -\varepsilon_Q-L_Q\varepsilon_\pi \\
&\geq \alpha(V^\star(x))+\Delta
      -\varepsilon_Q-L_\alpha\varepsilon_V-L_Q\varepsilon_\pi.
\end{align*}
Rearranging proves 
\eqref{eq:learned-qcbf-transfer-margin}.
The stated bound on $\Delta$ then implies
$\rtask\in\cR^\star_{\mathrm{QCBF}}(x)$.
\end{proof}

Corollary~\ref{cor:learned-qcbf-teacher-transfer} shows that the tightening
margin $\Delta$ provides a common budget for critic approximation error,
state-value approximation error, and imperfect distillation. In particular,
when $\varepsilon_Q=\varepsilon_V=0$, feasibility of the teacher problem gives
$Q^\star(x,\rteach)
\geq
\classkl\bigl(V^\star(x)\bigr)+\Delta$.
For any task-policy proposal satisfying
$\|\rtask-\rteach\|\leq\varepsilon_\pi$, Lipschitz continuity then implies
\[
Q^\star(x,\rtask)
\geq
Q^\star(x,\rteach)-L_Q\|\rtask-\rteach\|
\geq
\classkl\bigl(V^\star(x)\bigr)
\]
whenever $\Delta\geq L_Q\varepsilon_\pi$. Hence
$\rtask\in\cR^\star_{\mathrm{QCBF}}(x)$ as in Proposition~\ref{prop:qcbf-teacher-transfer}.

For $\Delta=0$, \eqref{eq:learned-qcbf-transfer-margin} still provides
an approximate Q-CBF bound, but it does not certify exact invariance
unless the combined approximation and distillation error vanishes.
Positive tightening is therefore what converts bounded learning and
distillation errors into an exact Q-CBF condition.

The teacher margin provides a common budget for critic approximation error, state-value error, and imperfect distillation. If the task policy selects Q-CBF-admissible residuals for every $x\in\cC^\star(\cXsafe)$, then the invariance result of Section~\ref{sec:qcbf} applies directly to the unfiltered task policy. Training alone encourages this condition yet does not establish it uniformly over $\cC^\star(\cXsafe)$; a formal filter-free guarantee therefore requires the policy-specific Q-CBF condition to hold over the claimed operating domain.

If the task policy selects Q-CBF-admissible residuals for every
$x\in\cC^\star(\cXsafe)$, then the invariance result of
Section~\ref{sec:qcbf} applies directly to the unfiltered task policy.
Training encourages this condition on sampled states but does not establish
it uniformly over $\cC^\star(\cXsafe)$; exact infinite-horizon invariance
therefore requires the policy-specific Q-CBF condition to hold throughout
the claimed operating domain. Beyond showing that the distilled policy inherits the Q-CBF safety condition, we can also obtain finite-horizon high-probability safety guarantees for both stages of LIMBO.  

\section{Safety Guarantees}
\label{sec:safety-guarantees}

We state two finite-horizon safety guarantees: one for the Stage~I
safety policy induced by the learned Q-CBF, and one for the Stage~II
task policy after the Q-CBF teacher has been internalized. Let
$\Omega$ denote the probability space for the randomness in training.
For each outcome $\omega\in\Omega$, the training procedure returns
learned functions $Q_\phi(\cdot,\cdot;\omega)$ and
$V_\phi(\cdot;\omega)$. Define $\mathcal G_Q\subseteq\Omega$ as the
event on which
\[
\sup_{x\in\cX,\,r\in\cR}
\bigl|Q_\phi(x,r;\omega)-Q^\star(x,r)\bigr|
\leq\varepsilon_Q,\quad\text{and}\quad
\sup_{x\in\cX}
\bigl|V_\phi(x;\omega)-V^\star(x)\bigr|
\leq\varepsilon_V.
\]
Assume $\Pr(\mathcal G_Q)\geq1-\delta_Q$ and that $\classkl$ is
$L_\classkl$-Lipschitz continuous on the relevant value range.
The same statements hold if the suprema in $\mathcal G_Q$ are
restricted to a verified reachable tube containing the closed-loop
trajectories.

\begin{theorem}[Stage-I high-probability safety]
\label{thm:main-stage-one-safety}
Let $R\sim\pi^\Ir(\cdot\,|\,x)$. Suppose there exist
$\Delta\geq\varepsilon_Q+L_\classkl\varepsilon_V$ and
$\delta\in[0,1]$ such that, for every $x\in\cC^\star(\cXsafe)$,
\[
\Pr\bigl(
    Q_\phi(x,R)<\classkl\bigl(V_\phi(x)\bigr)+\Delta
\bigr)
\leq\delta.
\]
If $x_0\in\cC^\star(\cXsafe)$, then for every finite horizon $H$,
\begin{equation}
\label{eq:main-stage-one-safety}
\Pr\bigl(
    x_t\in\cC^\star(\cXsafe)
    \ \forall\,t=0,\ldots,H
\bigr)
\geq1-\delta_Q-H\delta.
\end{equation}
In particular,
$\Pr(\exists\,t\leq H:x_t\in\cF)
\leq\min\{1,\delta_Q+H\delta\}$.
Moreover, a deterministic Q-CBF filter yields $\delta=0$.
\end{theorem}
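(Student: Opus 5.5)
We condition on the good event $\mathcal G_Q$ and combine a per-step deterministic invariance argument with a union bound over the horizon. First, we show that on $\mathcal G_Q$, for any $x\in\cC^\star(\cXsafe)$ and any residual $r$ satisfying the learned tightened condition $Q_\phi(x,r)\geq\classkl(V_\phi(x))+\Delta$, we have $r\in\cR^\star_{\mathrm{QCBF}}(x)$. This is the chain of inequalities in the proof of Corollary~\ref{cor:learned-qcbf-teacher-transfer} with $\varepsilon_\pi=0$:
\[
Q^\star(x,r)\geq Q_\phi(x,r)-\varepsilon_Q\geq\classkl(V_\phi(x))+\Delta-\varepsilon_Q\geq\classkl(V^\star(x))+\Delta-\varepsilon_Q-L_\classkl\varepsilon_V\geq\classkl(V^\star(x)).
\]
By the forward-invariance argument of Section~\ref{sec:qcbf}, we then obtain $V^\star(f_{\mathrm{res}}(x,r))\geq\classkl(V^\star(x))\geq0$. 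Hence the successor state stays in $\cC^\star(\cXsafe)$. This step is purely deterministic, and it uses the uniform bounds defining $\mathcal G_Q$. Those bounds only need to hold on the set of visited states, which justifies the reachable-tube remark.

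Next, let $E_t$ denote the event that $Q_\phi(x_t,R_t)<\classkl(V_\phi(x_t))+\Delta$ while $x_t\in\cC^\star(\cXsafe)$. By the per-step argument, on $\mathcal G_Q\cap\bigcap_{t<H}E_t^c$, induction from $x_0\in\cC^\star(\cXsafe)$ gives $x_t\in\cC^\star(\cXsafe)$ for all $t\leq H$. So the failure event is contained in $\mathcal G_Q^c\cup\bigcup_{t=0}^{H-1}E_t$. The union bound then gives failure probability at most $\delta_Q+\sum_t\Pr(E_t)$, and it remains to show $\Pr(E_t)\leq\delta$.

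The main obstacle is this last step. The hypothesis is a per-state bound, while $x_t$ is random and depends both on training randomness $\omega$ (through $Q_\phi$, $V_\phi$, $\pi^\Ir$) and on earlier action draws. I would handle this by conditioning on $\omega$ and on the history up to time $t$, i.e.\ on $x_t$. The action $R_t$ is drawn fresh from $\pi^\Ir(\cdot\,|\,x_t)$, so on $\{x_t\in\cC^\star(\cXsafe)\}$ the conditional probability of $E_t$ is at most $\delta$ by hypothesis, read as holding for each fixed trained outcome. Taking expectations via the tower property yields $\Pr(E_t)\leq\delta$. The dynamics are deterministic, so no additional noise enters.

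Summing over $t=0,\dots,H-1$ gives the bound $1-\delta_Q-H\delta$. The failure-set statement follows because $\cC^\star(\cXsafe)\subseteq\cXsafe$ is disjoint from $\cF$, and probabilities are at most $1$. For a deterministic filter that always returns an action satisfying the tightened condition, feasible for instance when the margin allows it, each $E_t$ is empty, so $\delta=0$.
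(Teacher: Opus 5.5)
Your proposal is correct and follows the paper's proof essentially step for step. The steps are: the deterministic one-step transfer on $\mathcal G_Q$ from the tightened learned condition to exact Q-CBF admissibility, so that $x_{t+1}\in\cC^\star(\cXsafe)$; induction on $\mathcal G_Q\cap\bigcap_{t<H}\mathcal E_t^c$; and a union bound in which $\Pr(\mathcal E_t)\leq\delta$ comes from conditioning on the training outcome and history (the paper's $\mathscr F_t$) and applying the tower property, while the $\cF$ statement and the $\delta=0$ deterministic-filter case match the paper.
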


\begin{proof}[Proof sketch]
On $\mathcal G_Q$, the tightening
$\Delta\geq\varepsilon_Q+L_\classkl\varepsilon_V$ converts each
successful learned Q-CBF action into an exact Q-CBF-admissible action.
Q-CBF invariance then keeps the next state in $\cC^\star(\cXsafe)$.
Induction over $H$ steps and a union bound over $\mathcal G_Q^c$
and action-selection failures prove
\eqref{eq:main-stage-one-safety}.
\end{proof}
The full proof is given in Appendix~\ref{app:hp-safety}.

Next we state a high probability safety guarantee for the stage II task policy wherein we have distilled the Q-CBF structure into the  policy during training. 
\begin{theorem}[Stage-II high-probability internalized safety]
\label{thm:main-stage-two-safety}
Suppose $Q^\star(x,\cdot)$ is $L_Q$-Lipschitz continuous on $\cR$,
uniformly over $\cC^\star(\cXsafe)$. Define the Q-CBF teacher by
\begin{equation}
\label{eq:main-stage-two-teacher}
\rteach_{\phi,\Delta}(x,r)
\in
\argmin_{
    \tilde r\in\cR
    }
\left\{\tfrac{1}{2}\|\tilde r-r\|_W^2\mid c_{\phi,\Delta}(x,\tilde r)\leq0\right\},
\end{equation}
and assume it is feasible on $\cC^\star(\cXsafe)$.
Let $R\sim\pitask_\theta(\cdot\,|\,x)$. Suppose
\[
\Pr\bigl(
    \|R-\rteach_{\phi,\Delta}(x,R)\|>\varepsilon_\pi
\bigr)
\leq\delta_\pi
\]
for every $x\in\cC^\star(\cXsafe)$. Set
$\Delta\geq\varepsilon_Q+L_\classkl\varepsilon_V+L_Q\varepsilon_\pi$.
For every $x_0\in\cC^\star(\cXsafe)$ and finite horizon $H$,
\begin{equation}
\label{eq:main-stage-two-safety}
\Pr\bigl(
    x_t\in\cC^\star(\cXsafe)
    \ \forall\,t=0,\ldots,H
\bigr)
\geq1-\delta_Q-H\delta_\pi.
\end{equation}
In particular,
$\Pr(\exists\,t\leq H:x_t\in\cF)
\leq\min\{1,\delta_Q+H\delta_\pi\}$.
\end{theorem}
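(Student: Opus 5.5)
The argument mirrors the Stage-I proof sketch, with Corollary~\ref{cor:learned-qcbf-teacher-transfer} standing in for the pure tightening step. I will work on the good event $\mathcal G_Q$, on which the learned critic and value are uniformly within $\varepsilon_Q$ and $\varepsilon_V$ of $Q^\star$ and $V^\star$. I will then show that a single step from any $x\in\cC^\star(\cXsafe)$ stays in $\cC^\star(\cXsafe)$ unless the distillation event fails. Chaining over $H$ steps with a union bound then gives the claim.

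\emph{One-step implication.} Fix $\omega\in\mathcal G_Q$ and $x\in\cC^\star(\cXsafe)$. By assumption the teacher problem \eqref{eq:main-stage-two-teacher} is feasible, so $\rteach:=\rteach_{\phi,\Delta}(x,R)$ satisfies $Q_\phi(x,\rteach)\ge\classkl(V_\phi(x))+\Delta$. Suppose the proposal satisfies $\|R-\rteach\|\le\varepsilon_\pi$. Then the chain of inequalities in the proof of Corollary~\ref{cor:learned-qcbf-teacher-transfer} applies verbatim: Lipschitz continuity of $Q^\star(x,\cdot)$, then the bound $\varepsilon_Q$, then feasibility, then Lipschitz continuity of $\classkl$ together with $\varepsilon_V$. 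This yields $Q^\star(x,R)\ge\classkl(V^\star(x))+\Delta-\varepsilon_Q-L_\classkl\varepsilon_V-L_Q\varepsilon_\pi\ge\classkl(V^\star(x))$, so $R\in\cR^\star_{\mathrm{QCBF}}(x)$. The invariance argument of Section~\ref{sec:qcbf} then gives $V^\star(f_{\mathrm{res}}(x,R))\ge\classkl(V^\star(x))\ge0$, that is, $x'\in\cC^\star(\cXsafe)$. One subtlety is that the teacher is anchored at the random proposal $R$ itself. This causes no difficulty, because the corollary holds pointwise for any proposal and its teacher correction.

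\emph{Chaining.} Define the bad events $B_t:=\{\|R_t-\rteach_{\phi,\Delta}(x_t,R_t)\|>\varepsilon_\pi\}$. By induction, on $\mathcal G_Q\cap\bigcap_{t<H}B_t^c$ every $x_t$, $t=0,\dots,H$, lies in $\cC^\star(\cXsafe)$. Hence the failure event is contained in $\mathcal G_Q^c\cup\bigcup_{t<H}\bigl(B_t\cap\{x_t\in\cC^\star(\cXsafe)\}\bigr)$. This containment only needs the first bad step: before it, all states are in $\cC^\star$.

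\emph{The main obstacle} is bounding $\Pr(B_t\cap\{x_t\in\cC^\star\})\le\delta_\pi$. The hypothesis is stated per fixed $x$, while $x_t$ is random and depends on the training outcome $\omega$ through $\phi,\theta$. I would handle this by conditioning on $(\omega,x_t)$. The action $R_t\sim\pitask_\theta(\cdot\,|\,x_t)$ is drawn with fresh execution randomness independent of the past. So on $\{x_t\in\cC^\star\}$ the conditional probability of $B_t$ is at most $\delta_\pi$ by the uniform-in-$x$ hypothesis, and the tower property gives the bound. If needed, I would state explicitly that the $\delta_\pi$ bound holds conditionally on the trained parameters, as the Stage-I theorem implicitly does.

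Summing then gives failure probability at most $\delta_Q+H\delta_\pi$. Finally, $\cC^\star(\cXsafe)\subseteq\cXsafe$ is disjoint from the failure set $\cF$, which yields the corollary bound $\min\{1,\delta_Q+H\delta_\pi\}$.
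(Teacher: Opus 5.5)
Your proposal is correct and follows essentially the same route as the paper's proof: the same one-step admissibility chain on $\mathcal G_Q$ (teacher feasibility, critic error $\varepsilon_Q$, Lipschitz continuity of $Q^\star(x,\cdot)$ and of $\classkl$ with $\varepsilon_V$, then $V^\star(x^+)\geq Q^\star(x,r)\geq\classkl(V^\star(x))\geq0$), induction on $\mathcal G_Q\cap\bigcap_{t<H}(\mathcal E_t^\pi)^c$, and a union bound with each per-step failure probability controlled via the tower property. The only cosmetic difference is in how the per-step bound is justified: the paper simply interprets the distillation hypothesis as a conditional bound given the history $\mathscr F_t$, so it needs no independence across time, whereas you obtain it by conditioning on $(\omega,x_t)$ and assuming fresh, independent execution randomness.
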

\begin{proof}[Proof sketch]
On $\mathcal G_Q$, the teacher tightening
$\Delta\geq\varepsilon_Q+L_\classkl\varepsilon_V+L_Q\varepsilon_\pi$
budgets for critic error, value error, and policy imitation error.
Proposition~\ref{prop:qcbf-teacher-transfer} therefore ensures that
each successful filter-free Stage~II action satisfies the exact
Q-CBF condition. Q-CBF invariance, induction, and a union bound
over the $H$ distillation failures give
\eqref{eq:main-stage-two-safety}.
\end{proof}
The full proof is given in Appendix~\ref{app:hp-safety}.

%
%
\section{Experiments}
\label{sec:experiments}

We run two experiments, dodgeball and locomotion beneath obstacles, to evaluate high-dimensional Q-CBF synthesis, transfer of learned safety to hardware, and  risk-guided  behavior discovery. Dodgeball is a quantitative comparison of LIMBO and CBF-RL~\cite{yang2026pacmanperceptionawarecbfrlwholebody}, and tests sim-to-real transfer under dynamic collision avoidance. Locomotion beneath obstacles  evaluates coordinated whole-body safety  and isolates the mechanism by which replay concentration induces strategies. In the comparisons below, QCBF-PD denotes the Stage-II task policy trained using Algorithm~\ref{alg:qcbf-distillation}. 

\begin{figure}[t]
    \centering
        \includegraphics[width=0.8\columnwidth]{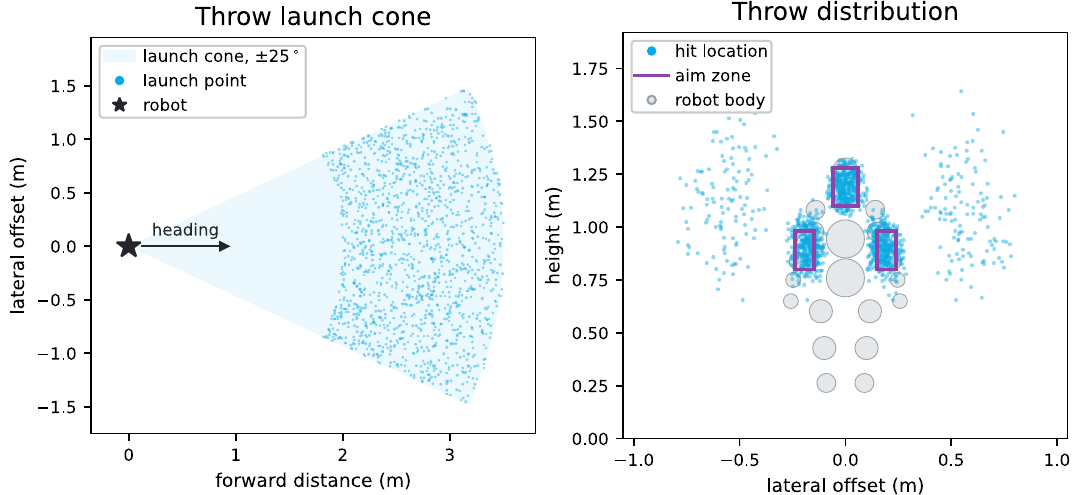}
    \caption{\textbf{Dodgeball training distribution.}
 Training throws span a $\pm25^\circ$ frontal cone and target the
    head and shoulders, with $15\%$ sampled misses.}
    \label{fig:dodge-distribution}
\end{figure}
\begin{figure}[t]
    \centering
    \includegraphics[width=0.9\columnwidth]{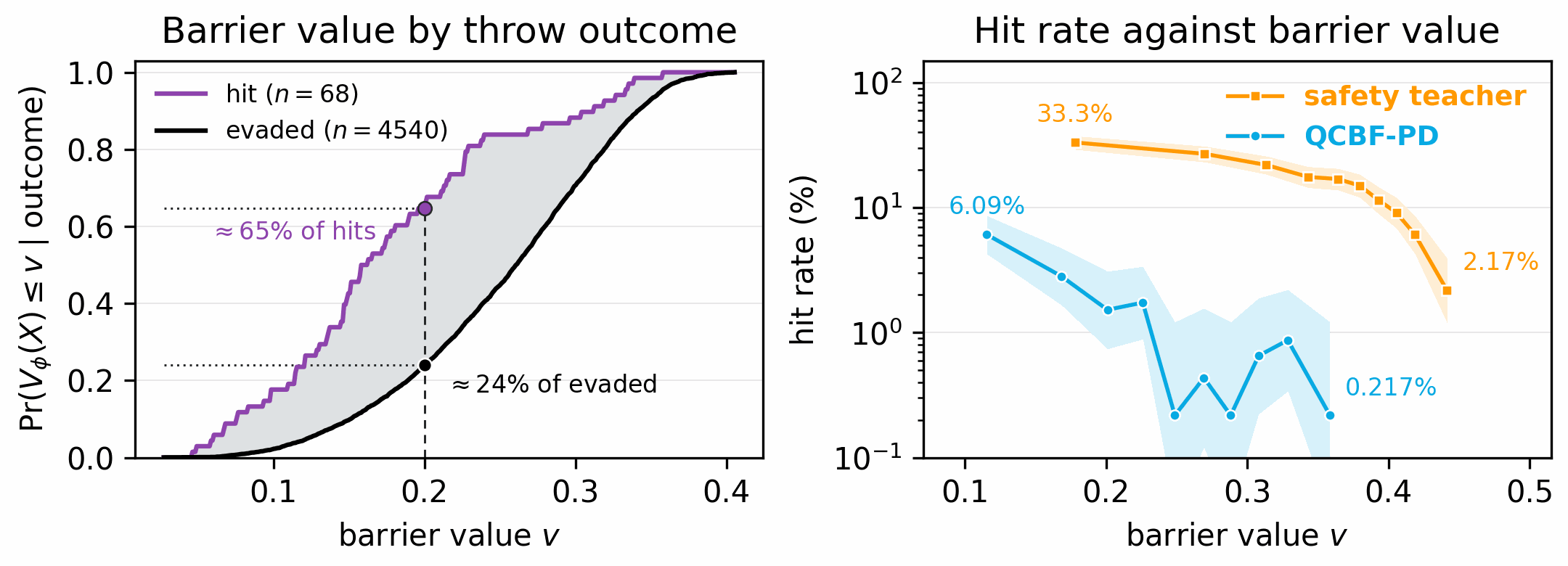}
    \caption{
     \textbf{Barrier value \& hit rate.} \emph{(left)} Cumulative distributions of launch-time safety values; hits tend to originate from states with lower safety value.
    \emph{(right)} 
     $\Pr_\pi(\mathrm{hit}\,|\, V_\phi(X) \in [v_k, v_{k+1}))$ in ten  bins, with 95\% intervals; points placed at each bin's mean.
    Higher launch-time $V_\phi$ corresponds to lower hit rates, and the distilled policy has lower hit rates.}
      \label{fig:barrier-value}
\end{figure}
\begin{figure}[h]
    \centering
    \includegraphics[width=0.9\linewidth]{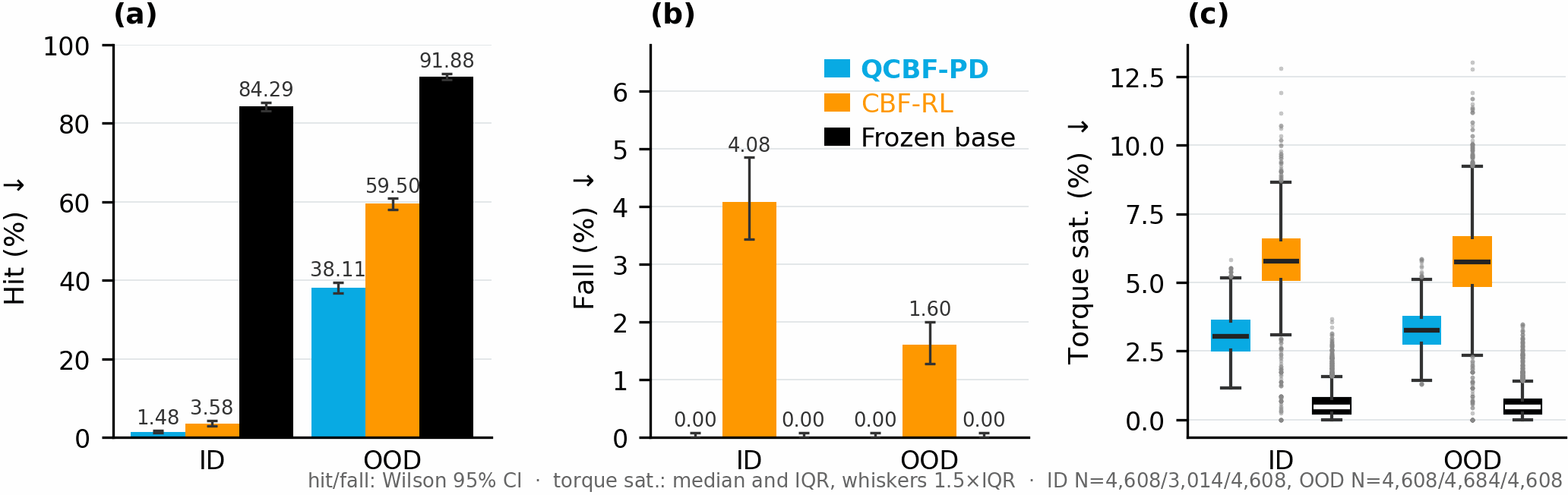}
    \caption{\textbf{Simulated dodgeball.} (a) Hit rate, (b) fall rate, and (c) torque-saturation; lower is better.
\textbf{\textcolor{bluefire}{QCBF-PD}} is hit less than half as often as \textbf{\textcolor{vitaminc}{CBF-RL}} on in distribution throws, and never falls, while \textbf{\textcolor{vitaminc}{CBF-RL}} falls on 4.08\%/1.6\% of
in/out of distribution throws; both policies differ from the frozen base at
$p$ below precision.
\textbf{\textcolor{bluefire}{QCBF-PD}} torque saturates its actuators roughly half as much as \textbf{\textcolor{vitaminc}{CBF-RL}}, while the base barely actuates, and absorbs the hits. 
}
    \label{fig:tableII-simdodgeball}
\end{figure}

\subsection{Common Experimental Setup}
\label{sec:common-exp-setup}

All experiments use a Unitree G1 humanoid with $29$ actuated degrees of freedom. Training is performed in \texttt{mjlab} with MuJoCo using $4096$ parallel environments, with policies operating at $50$~Hz. Across both demonstrations, the task policy produces whole-body joint residuals around a frozen base controller as in~\eqref{eq:whole-body-dynamics}, which provides standing balance for dodgeball and locomotion for the low-obstacle task. Both Q-CBF synthesis and policy distillation are performed entirely in simulation; the task policy is trained with PPO~\cite{ppo} using Q-CBF feedback, task and control rewards, and an adversarial motion prior~\cite{amp2021}. We use Kimodo-generated motions to construct the Stage-II AMP reference sets~\cite{RempeKimodo2026}. 
For sim-to-real transfer, we follow standard domain-randomization practice~\cite{miki2022learning}, varying robot dynamics, actuation, contact properties, and external perturbations during training. On hardware, the learned task policy is composed with the same frozen base controller via  the residual interface used in simulation. No learned $Q_\phi$ critic, Stage-I safety controller, or additional adaptation is used at deployment.

We use a common state-based safety specification across all experiments, combining whole-body collision avoidance, pelvis orientation, and trunk height above the ground as $g(x)=\min\{h_{\mathrm{collision}}(x),h_{\mathrm{tilt}}(x),h_{\mathrm{ground}}(x)\}$. Each term is normalized so that zero denotes its failure boundary, and $g(x)<0$ indicates violation of any safety condition. The same specification is used uniformly, with the exception that $h_{\mathrm{collision}}$ is task dependent, measuring collision with respect to the ball or overhead obstacle, respectively.

\subsection{Dodgeball Avoidance from Simulation to Hardware}
\label{sec:showcase-dodge}

Dodgeball  requires the robot to avoid incoming balls while maintaining balance and remaining near its initial position~\cite{yang2026pacmanperceptionawarecbfrlwholebody}.
The task policy receives
a history of proprioceptive observations and ball positions, rather than
explicit ball velocity, and outputs a 29-dimensional whole-body residual.
Collision safety is evaluated across the robot's body geometry.

We report hit rate and fall rate as the primary safety outcomes \cite{yang2026pacmanperceptionawarecbfrlwholebody}. In
simulation, we additionally measure torque saturation, while on hardware we
measure station drift and peak body tilt to quantify the magnitude of the
whole-body evasive response.
Fig.~\ref{fig:dodge-distribution} summarizes the training distribution.
Launch points span a $\pm25^\circ$ frontal cone, with variation in launch
position and flight time. We use a frontal cone to reflect the intended
onboard-perception setting, where a forward-facing depth camera  provides ball observations. Training targets the head and
shoulders and includes $15\%$ misses.

\begin{figure}[t]
    \centering
    \includegraphics[width=0.9\columnwidth]{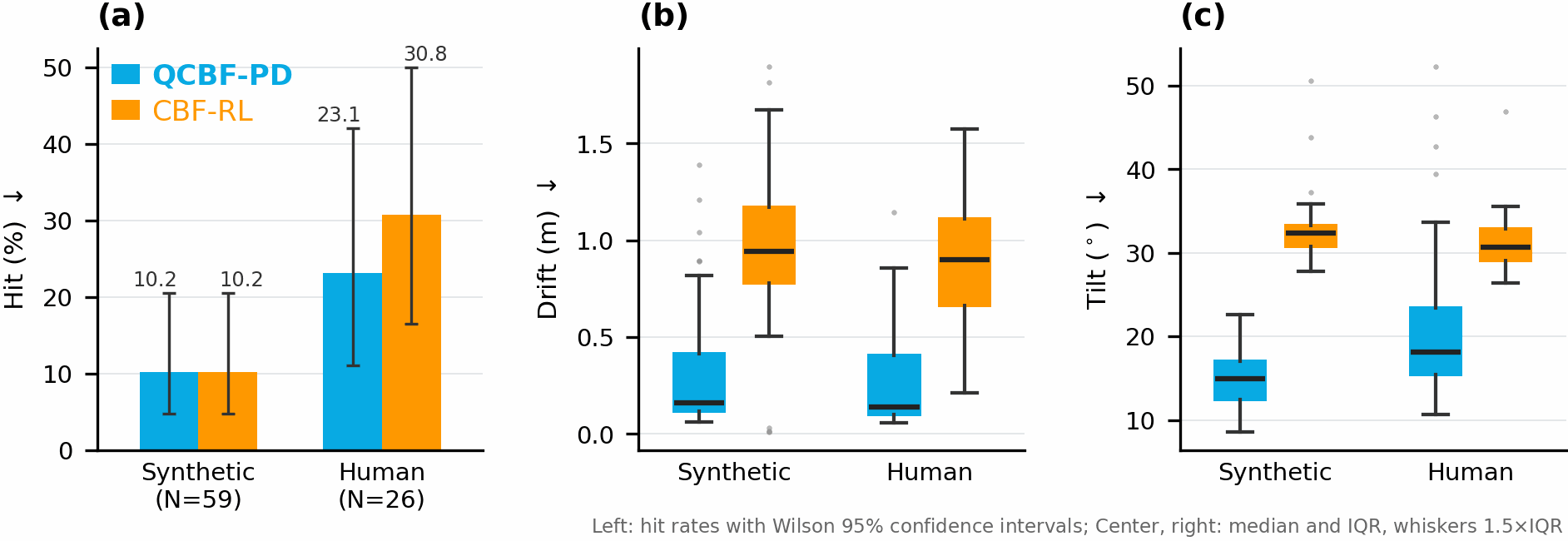}
    \caption{\textbf{Hardware with human throws.}
    Throw locations and outcomes.}
    \label{fig:hardware-throws}
\end{figure}

\begin{figure}[t]
    \centering
    \includegraphics[width=0.8\columnwidth]{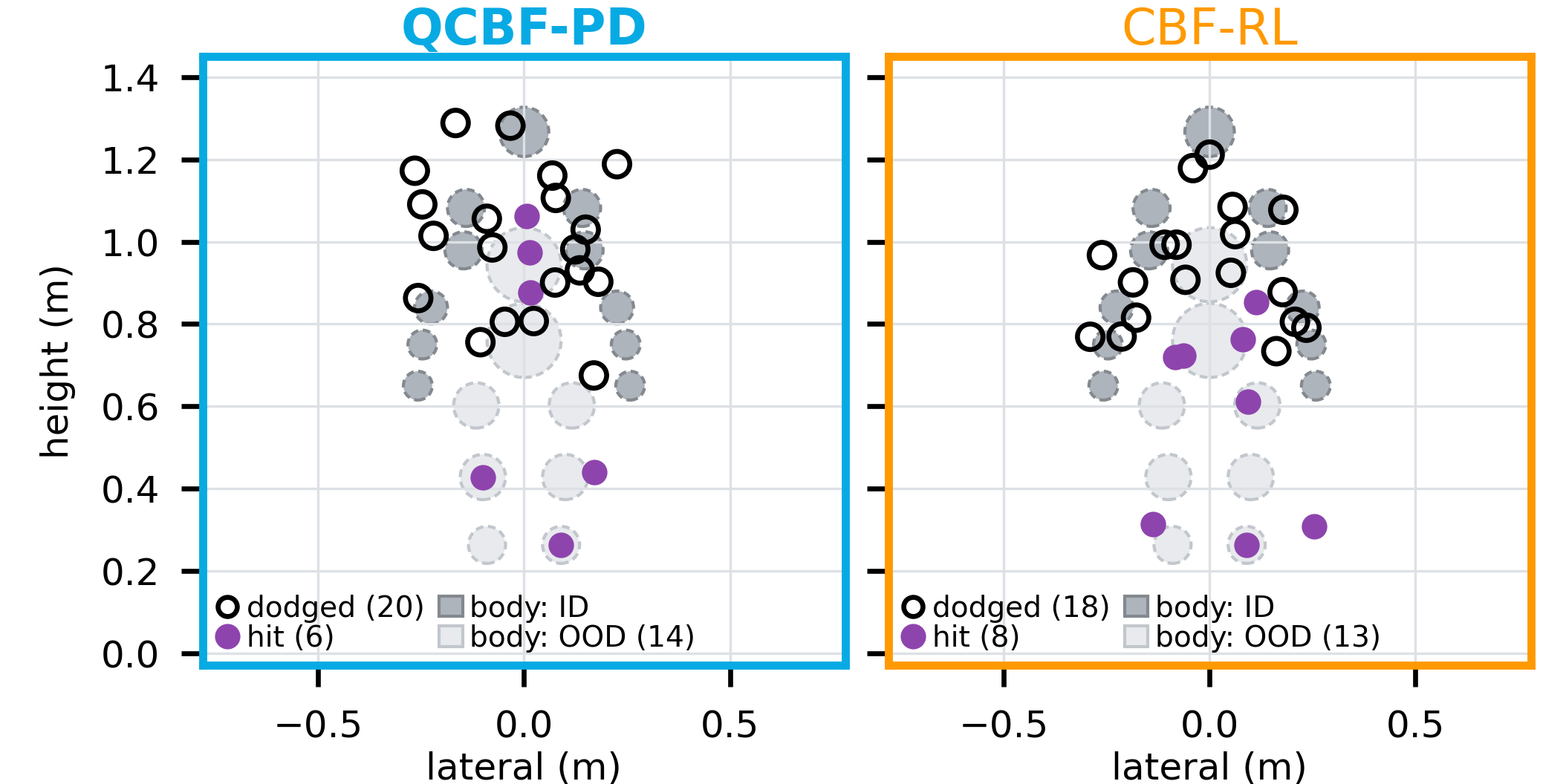}
    \caption{\textbf{Hardware dodgeball.}
    \textbf{\emph{Top, (a)}}:
    Hit rates are statistically indistinguishable:
    10.2\% on synthetic throws (exact McNemar on paired throws, $p = 1.0$)
    and 23.1\% vs.\ 30.8\% on human throws (Fisher exact, $p = 0.76$).
    \textbf{\emph{Top, (b/c)}}:
    While both policies are equally likely to be hit,
    \textbf{\textcolor{bluefire}{QCBF-PD}} evades with more agile motion.
    For \textbf{\textcolor{bluefire}{QCBF-PD}} vs.\
    \textbf{\textcolor{vitaminc}{CBF-RL}}, drift is
    0.16\,m vs.\ 0.94\,m (Wilcoxon, $p = 7\mathrm{e}^{-10}$)
    on synthetic and 0.14\,m vs.\ 0.90\,m
    (Mann--Whitney, $p = 2.1\mathrm{e}^{-6}$) on human throws,
    with peak tilt 14.9$^\circ$ vs.\ 32.4$^\circ$
    (Wilcoxon, $p = 2.4\mathrm{e}^{-11}$) and
    18.2$^\circ$ vs.\ 30.7$^\circ$
    (Mann--Whitney, $p = 4.6\mathrm{e}^{-4}$).}
    \label{fig:tableIII-humansynth-db}
\end{figure}

\subsubsection{Learned Safety Value}

We first test whether the learned state value $V_\phi$ captures encounter
difficulty. Fig.~\ref{fig:barrier-value} evaluates $V_\phi$ at ball
launch. The outcome-conditioned distributions in the left panel are clearly
separated: $\approx 65\%$ of throws that are hits
have $V_\phi\leq0.2$, in contrast to $24\%$ of evaded
throws. Thus, lower launch-time $V_\phi$ correspond to states from which
successful avoidance is empirically more difficult.
The right panel provides a complementary view by binning according
to  $V_\phi$. Hit rate generally decreases as $V_\phi$ increases
for both the Stage-I safety policy and QCBF-PD, supporting $V_\phi$ as an
empirical ordering of encounter difficulty. At the same time, QCBF-PD lies
well below the Stage-I safety policy across the observed value range,
showing that Stage~II does not merely reproduce the safety policy: it
substantially improves task performance while internalizing the learned
safety structure.

\subsubsection{Simulation Results}

We compare QCBF-PD with the analytical CBF-RL controller
of~\cite{yang2026pacmanperceptionawarecbfrlwholebody} and the frozen base
controller. Fig.~\ref{fig:tableII-simdodgeball} reports performance both
on the trained upper-body aim zones and on an out-of-distribution (OOD)
set targeting previously unseen body regions---the chest, torso, pelvis,
hips, and thighs (cf.~Fig.~\ref{fig:hardware-throws}).

On in-distribution throws, QCBF-PD is hit less than half as often as
CBF-RL ($1.48\%$ vs.~$3.58\%$) and incurs no falls, whereas CBF-RL
falls on $4.08\%$ of throws.  The separation becomes larger OOD: QCBF-PD is hit on $38.11\%$
of throws compared with $59.50\%$ for CBF-RL,
while QCBF-PD again incurs no falls and CBF-RL falls on
$1.60\%$ of throws.\footnote{Hit/fall-rate
differences are statistically significant: hit: (ID)
$p=4.5\scinot{9}$ and (OOD) $p=4.4\scinot{95}$; fall (ID) $p=5.9\scinot{51}$ and (OOD) $p=4.7\scinot{23}$.} 
Improved robustness does not require more aggressive actuation.
QCBF-PD's torque distribution changes little between the
trained and OOD sets, with medians $\approx 3.0\%$ and $3.3\%$,
and remains roughly half  CBF-RL ($5.8\%$ and $5.7\%$).
The two torque-saturation distributions differ significantly; Mann--Whitney $U$ tests are below double-precision significance.

\subsubsection{Hardware Results}

We evaluate the distilled policies on hardware in two complementary
settings. Synthetic trajectories isolate transfer of the robot's evasive
response under repeatable obstacles, while human throws
introduce variation in the throw, ball tracking, and physical interaction.

\noindent\textbf{\emph{Synthetic trajectories.}}
We replay the same $59$ synthetic ball trajectories for both controllers.
The robot uses real proprioceptive observations together with the scripted
ball-position. Fig.~\ref{fig:tableIII-humansynth-db} reports the same
$10.2\%$ hit rate for both methods. QCBF-PD has lower median drift
($0.162$ vs.\ $0.944$~m) and tilt ($14.9^\circ$ vs.\ $32.4^\circ$),
indicating smaller positional and angular excursions at the same observed
avoidance rate.

\noindent\textbf{\emph{Human throws.}}
We next evaluate human-thrown balls tracked with motion capture.
Fig.~\ref{fig:hardware-throws} reports the spatial distribution of the
physical throws in addition to their outcomes, making the coverage of the
hardware evaluation explicit. This information is not reported in the
PAC-MAN hardware evaluation~\cite{yang2026pacmanperceptionawarecbfrlwholebody},
which reports only the overall success rate for hand-thrown trials.
QCBF-PD evades $20$ of $26$ throws, compared with $18$ of $26$ for
CBF-RL, corresponding to hit rates of $23.1\%$ and $30.8\%$. At this
sample size the hit-rate difference is not statistically significant,
while QCBF-PD exhibits substantially smaller drift and body tilt as shown
in Fig.~\ref{fig:tableIII-humansynth-db}.

\subsection{LIMBO: Locomotion Under Low Obstacles}
\label{sec:showcase-limbo}
\label{sec:limbo}
The second task is forward locomotion beneath a  horizontal
obstacle (aka limbo). The frozen base controller continues to command forward walking,
while the learned residual must alter the whole-body motion sufficiently to
clear the obstacle without contact or loss of balance. The same
state-based safety structure is used as in dodgeball, with
$h_{\mathrm{collision}}$ now measuring clearance from the overhead
obstacle. Importantly, the specification does not prescribe how the robot
should clear the bar: both lowering the body and leaning backward are
compatible with safety. We evaluate cleared, struck and aborted rates, and the replay effect on the synthesized behavior in simulation. We  deploy QCBF-PD policy on hardware, and the results can be seen in the supplemental video. 

\begin{figure}
    \centering
    \includegraphics[width=1.0\columnwidth]{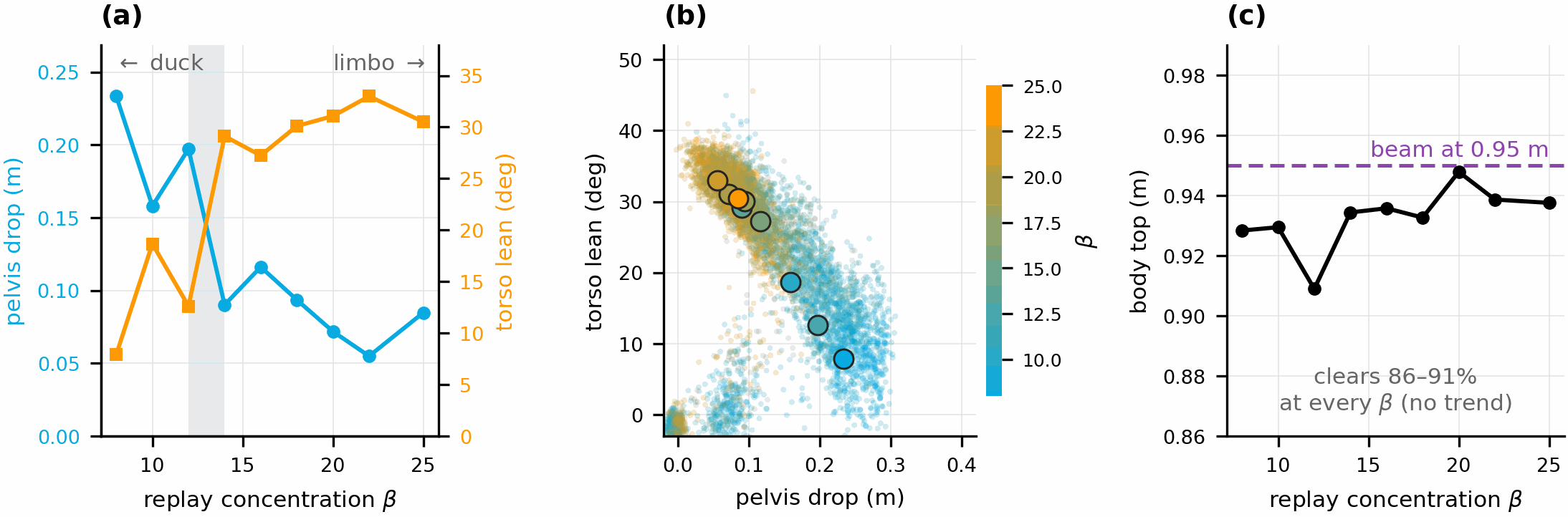}
\caption{\textbf{Value concentration sets avoidance behavior.}
\textbf{\emph{(a)}}: \textbf{\textcolor{bluefire}{pelvis drop}} and \textbf{\textcolor{vitaminc}{torso lean}} vs.~$\beta$; 
the shaded band marks where the dominant mechanism flips.
\textbf{\emph{(b)}}: per-wall pelvis drop against torso lean, one point per scored
wall, large markers the per-$\beta$ medians. 
\textbf{\emph{(c)}}: the highest point on the robot lands in the same
$0.909$--$0.948$\,m band at every $\beta$, and
clear rate stays flat. }

\label{fig:beta-sweep}
\end{figure}

\begin{figure}[h]
    \centering
    \includegraphics[width=0.8\columnwidth]{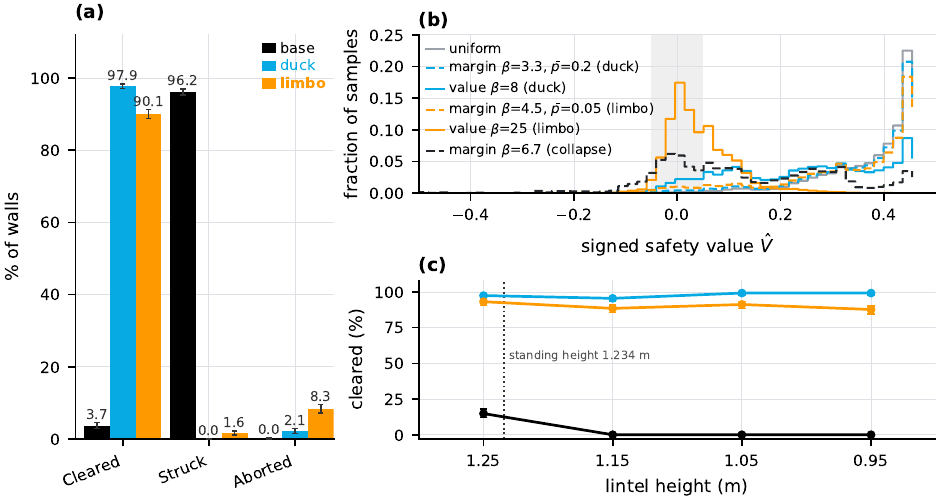}
\caption{\textbf{Clearance outcomes \& sampling distribution.}
\textbf{\emph{(a)}}: Duck and limbo clear comparably; limbo's $8.3\%$ abort rate is all lane exits  (topple rate$=0\%$). \textbf{\emph{(b)}}: replay selected mass over signed safety value $\hat V$ given replay distribution  $p_\beta\propto \exp(-\beta b_\phi) + \bar{p}$, for $b_\phi=|g|$ or $|\hat{V}|$.
\textbf{\emph{(c)}}:  Dotted line
marks standing head height ($1.234$\,m): the base policy falls off here;
duck and limbo hold $88$--$99\%$, with headroom at all rungs.  }
\label{fig:limbo-outcomes}
\end{figure}

\paragraph{\emph{Simulation Evaluation.}}
Clearing a beam below standing head height ($1.234$\,m) forces the robot to
lower its highest point: it can bend the knees and drop the pelvis, or tip the torso back.  Sweeping replay concentration $\beta$ over the Stage-I synthesis (via value-based sampling rule $p \propto e^{-\beta|\hat V|}$) sets is the ratio between lean and drop (Fig.~\ref{fig:beta-sweep}, left). Across the sweep the pelvis drop falls by $\approx 4\times$ ($0.233 \to 0.055$\,m) while the torso lean rises by  $\approx 4\times$ ($7.9 \to 33.0^\circ$), with knee flexion tracking the pelvis drop. The dominant mechanism flips for $\beta\in(12,14)$.  The per-wall scatter shows a Pareto front for safety margin components (Fig.~\ref{fig:beta-sweep}, center). The value of $\beta$ changes which states dominate learning: strong boundary concentration targets the edge of recoverability, exposing states in which substantial backward tilt remains viable (cf.~\ref{fig:beta-sweep}, right top). Additionally, clear rate stays flat $85.7\%$ and $91.2\%$ indicating no difference in capability (Fig.~\ref{fig:beta-sweep}, right).

The two policies we deploy are two such operating points, taken from opposite
ends of this front. Measured over $2{,}048$ walls each, the duck and limbo policies
lower the body top by the same amount ($0.285$ vs.\ $0.289$\,m) with  $11.2^\circ$ vs.\ $30.5^\circ$ of pelvis tilt---i.e., same clearance, different margin. The deeper lean is paid back in station-keeping: limbo drifts $0.775$\,m compared to duck's $0.392$\,m, with a correspondingly larger heading error ($27.5^\circ$ vs.\ $3.7^\circ$). This is the honest cost of the maneuver, not a stability failure; across all walls the limbo never topples, and its $8.3\%$ abort rate (Fig.~\ref{fig:limbo-outcomes}, left) is composed entirely of lane exits. Both deployed policies hold $88$--$99\%$ clearance at every beam height, with headroom to spare at the lowest rung, while the frozen base policy clears $14.8\%$ at $1.25$\,m and $0\%$ at every lower rung (Fig.~\ref{fig:limbo-outcomes}, right).

\section{Conclusion}
We present LIMBO, a scalable framework for synthesizing a safety certificate from black-box transitions and distilling its guidance into a task policy. The residual Q-CBF formulation combines risk-guided boundary exploration with counterfactual feedback during policy training, placing safety synthesis and task learning in the same control space. We characterize conditions for safety transfer under approximation and distillation errors and provide finite-horizon, high-probability guarantees for both stages. 

Our experiments in humanoid dodgeball and limbo demonstrate safe, agile behavior on hardware without online filtering. We further show that, under the same safety specification, varying the boundary-sampling concentration produces distinct strategies, from crouching to backward-leaning limbo. Future work includes extending LIMBO to other embodiments (e.g., dexterous manipulation) and further investigating how risk-guided boundary sampling shapes safe learning and behavior discovery.




%
%

\bibliographystyle{plainnat}
\bibliography{refs}


\appendix
\section{Proof of Infinite-Horizon Viability Guarantee}\label{app:viability-proof}
Here we prove the infinite horizon viability guarantee. 

\begin{proof}[Proof of Proposition~\ref{prop:viability-value}]
We prove the result in several steps.

\paragraph{Step 1: finite-horizon values are continuous and satisfy the
Bellman recursion.}
Recall that
\[
V_0=\marginfn,
\qquad
V_{H+1}=\mathcal T V_H,
\]
where
\[
(\mathcal T V)(x)
=
\min\left\{
\marginfn(x),
\max_{u\in\cU}V\bigl(f(x,u)\bigr)
\right\}.
\]
We first show by induction that $V_H$ is continuous for every finite
$H$. The base case $V_0=\marginfn$ is continuous by
Assumption~\ref{ass:viability-regularity}. Suppose that $V_H$ is
continuous. Then
\[
(x,u)\longmapsto V_H\bigl(f(x,u)\bigr)
\]
is continuous because both $f$ and $V_H$ are continuous.

Define
\[
M_H(x)
:=
\max_{u\in\cU}V_H\bigl(f(x,u)\bigr).
\]
Since $\cU$ is compact, the maximum is attained for every $x$.
Moreover, $M_H$ is continuous. To see this directly, let
$x_n\to x$. For a maximizer
\[
u^\star\in
\argmax_{u\in\cU}V_H\bigl(f(x,u)\bigr),
\]
we have
\[
M_H(x_n)
\geq
V_H\bigl(f(x_n,u^\star)\bigr),
\]
and hence, by continuity,
\[
\liminf_{n\to\infty}M_H(x_n)
\geq
V_H\bigl(f(x,u^\star)\bigr)
=
M_H(x).
\]
For the reverse inequality, choose
\[
u_n\in
\argmax_{u\in\cU}V_H\bigl(f(x_n,u)\bigr).
\]
By compactness of $\cU$, every subsequence of $\{u_n\}$ has a further
subsequence, denoted again by $\{u_n\}$, converging to some
$\bar u\in\cU$. Along such a subsequence,
\[
M_H(x_n)
=
V_H\bigl(f(x_n,u_n)\bigr)
\longrightarrow
V_H\bigl(f(x,\bar u)\bigr)
\leq
M_H(x).
\]
Thus
\[
\limsup_{n\to\infty}M_H(x_n)\leq M_H(x).
\]
Combining the two inequalities shows that $M_H$ is continuous.
Therefore
\[
V_{H+1}(x)
=
\min\{\marginfn(x),M_H(x)\}
\]
is continuous. By induction, $V_H\in C(\cX)$ for every finite $H$.

\paragraph{Step 2: the sequence $\{V_H\}$ decreases pointwise and has a
finite limit.}
The Bellman operator is monotone: if $V\leq W$ pointwise, then
$
\mathcal T V\leq\mathcal T W$.
Indeed, we have 
\[
V\bigl(f(x,u)\bigr)
\leq
W\bigl(f(x,u)\bigr)
\qquad
\forall (x,u),
\]
so that 
\[
\max_{u\in\cU}V\bigl(f(x,u)\bigr)
\leq
\max_{u\in\cU}W\bigl(f(x,u)\bigr),
\]
and taking the minimum with the common term $\marginfn(x)$ preserves
the inequality.

Moreover, we have
$
\mathcal T\marginfn\leq\marginfn$,
since the first argument of the minimum defining
$\mathcal T\marginfn$ is $\marginfn(x)$. Hence
\[
V_1
=
\mathcal T V_0
=
\mathcal T\marginfn
\leq
\marginfn
=
V_0.
\]
Applying monotonicity repeatedly yields the bound
\[
V_{H+1}
=
\mathcal T V_H
\leq
\mathcal T V_{H-1}
=
V_H
\qquad
\forall H\geq0.
\]
Thus $\{V_H(x)\}_{H\geq0}$ is nonincreasing for every fixed
$x\in\cX$.

Since $\marginfn$ is bounded, there exists
$\underline g\in\mathbb R$ such that
$
\underline g
\leq
\marginfn(x)$
for all $x\in\cX$.
From the finite-horizon definition,
\[
V_H(x)
=
\max_{\mathbf u\in\cU^H}
\min_{0\leq k\leq H}\marginfn(x_k),
\]
and therefore
$
V_H(x)\geq\underline g$.
Hence, for every $x$, the nonincreasing sequence
$\{V_H(x)\}$ is bounded below and consequently converges. Define
\[
V^\star(x)
:=
\lim_{H\to\infty}V_H(x)
=
\inf_{H\geq0}V_H(x).
\]
The preceding bounds show that $V^\star(x)\in\mathbb R$ for every
$x$, and since all $V_H$ are uniformly bounded by the bounds on
$\marginfn$, $V^\star$ is bounded.

\paragraph{Step 3: $V^\star$ is upper semicontinuous.}
Since each $V_H$ is continuous and
\[
V^\star=\inf_{H\geq0}V_H,
\]
the function $V^\star$ is upper semicontinuous. Indeed, for any
$a\in\mathbb R$, we have 
\[
\{x:V^\star(x)<a\}
=
\bigcup_{H\geq0}\{x:V_H(x)<a\},
\]
which is open because every $V_H$ is continuous. Therefore
$V^\star\in\mathrm{USC}_b(\cX)$ as claimed.

\paragraph{Step 4: the Bellman maximization passes to the infinite-horizon
limit.}
Fix $x\in\cX$, and define
\[
m_H(x)
:=
\max_{u\in\cU}V_H\bigl(f(x,u)\bigr).
\]
Since $V_{H+1}\leq V_H$, the sequence $\{m_H(x)\}$ is
nonincreasing and therefore has a limit; write
$
m_\infty(x)
:=
\lim_{H\to\infty}m_H(x)$.
Since $V^\star\leq V_H$ pointwise,
\[
\max_{u\in\cU}V^\star\bigl(f(x,u)\bigr)
\leq
m_H(x)
\qquad
\text{for all}\; H,
\]
and hence
\begin{equation}
\label{eq:viability-max-limit-lower}
\max_{u\in\cU}V^\star\bigl(f(x,u)\bigr)
\leq
m_\infty(x).
\end{equation}
The maximum on the left is attained because
$V^\star$ is upper semicontinuous, $f(x,\cdot)$ is continuous,
and $\cU$ is compact.

We now prove the reverse inequality. For each $H$, choose a
maximizer
\[
u_H\in
\argmax_{u\in\cU}V_H\bigl(f(x,u)\bigr),
\]
so that
$
m_H(x)=V_H\bigl(f(x,u_H)\bigr)$.
By compactness of $\cU$, there exists a subsequence
$\{u_{H_j}\}_{j\geq1}$ converging to some $\bar u\in\cU$.
Since $m_H(x)$ itself converges, the same subsequence satisfies
$
m_{H_j}(x)\longrightarrow m_\infty(x)$.

Fix any finite $K$. For all sufficiently large $j$,
$H_j\geq K$. Since the value sequence is decreasing,
$
V_{H_j}\leq V_K$,
and therefore
\[
m_{H_j}(x)
=
V_{H_j}\bigl(f(x,u_{H_j})\bigr)
\leq
V_K\bigl(f(x,u_{H_j})\bigr).
\]
Taking $j\to\infty$ and using continuity of $V_K$ and $f$ gives
\[
m_\infty(x)
\leq
V_K\bigl(f(x,\bar u)\bigr).
\]
Since this holds for every $K$,
\begin{align}
m_\infty(x)
&\leq
\inf_{K\geq0}
V_K\bigl(f(x,\bar u)\bigr) =
V^\star\bigl(f(x,\bar u)\bigr) \leq
\max_{u\in\cU}
V^\star\bigl(f(x,u)\bigr).
\label{eq:viability-max-limit-upper}
\end{align}
Combining
\eqref{eq:viability-max-limit-lower} and
\eqref{eq:viability-max-limit-upper} yields
\begin{equation}
\label{eq:viability-max-limit}
\lim_{H\to\infty}
\max_{u\in\cU}V_H\bigl(f(x,u)\bigr)
=
\max_{u\in\cU}V^\star\bigl(f(x,u)\bigr).
\end{equation}

\paragraph{Step 5: $V^\star$ is a fixed point of $\mathcal T$.}
Using the finite-horizon Bellman recursion,
\[
V_{H+1}(x)
=
\min\left\{
\marginfn(x),
\max_{u\in\cU}V_H\bigl(f(x,u)\bigr)
\right\}.
\]
Taking $H\to\infty$ and applying
\eqref{eq:viability-max-limit} gives
\begin{align*}
V^\star(x)
&=
\min\left\{
\marginfn(x),
\max_{u\in\cU}
V^\star\bigl(f(x,u)\bigr)
\right\}=
(\mathcal T V^\star)(x).
\end{align*}
Since $x$ was arbitrary, we have
$
\mathcal T V^\star=V^\star$.

\paragraph{Step 6: $V^\star$ is the greatest bounded upper
semicontinuous fixed point.}
Let
$W\in\mathrm{USC}_b(\cX)$ be any fixed point of $\mathcal T$:
\[
\mathcal T W=W.
\]
From the definition of $\mathcal T$, we have that 
\[
W(x)
=
\min\left\{
\marginfn(x),
\max_{u\in\cU}W\bigl(f(x,u)\bigr)
\right\}
\leq
\marginfn(x),
\]
so that 
$
W\leq V_0$.
By monotonicity of $\mathcal T$,
\[
\mathcal T^H W
\leq
\mathcal T^H V_0
=
V_H
\qquad
\forall H\geq0.
\]
Since $W$ is a fixed point, we have that 
$
\mathcal T^H W=W$,
and therefore we deduce that 
$W\leq V_H$
for all $H$.
Taking the infimum over $H$ yields
\[
W\leq
\inf_{H\geq0}V_H
=
V^\star.
\]
Thus every bounded upper semicontinuous fixed point of
$\mathcal T$ lies below $V^\star$. Since
$V^\star\in\mathrm{USC}_b(\cX)$ and
$\mathcal T V^\star=V^\star$, $V^\star$ itself belongs to this
class. Consequently, pointwise, we have that 
\[
V^\star
=
\sup\left\{
W\in\mathrm{USC}_b(\cX):
\mathcal T W=W
\right\}.
\]

\paragraph{Step 7: $\cC^\star(\cXsafe)$ is controlled invariant.}
Define
\[
\cC^\star(\cXsafe)
:=
\{x\in\cX:V^\star(x)\geq0\}.
\]
Since
\[
V^\star(x)
=
(\mathcal T V^\star)(x)
\leq
\marginfn(x),
\]
every $x\in\cC^\star(\cXsafe)$ satisfies
$\marginfn(x)\geq0$. Hence
$
\cC^\star(\cXsafe)\subseteq\cXsafe$.

Now fix $x\in\cC^\star(\cXsafe)$. The Bellman equation gives
\[
V^\star(x)
=
\min\left\{
\marginfn(x),
\max_{u\in\cU}V^\star\bigl(f(x,u)\bigr)
\right\}
\geq0.
\]
It follows that
\[
\max_{u\in\cU}
V^\star\bigl(f(x,u)\bigr)
\geq0.
\]
Because $V^\star\circ f(x,\cdot)$ is upper semicontinuous on the
compact set $\cU$, the maximum is attained. Thus there exists
$u^\star\in\cU$ such that
\[
V^\star\bigl(f(x,u^\star)\bigr)\geq0.
\]
Therefore
$
f(x,u^\star)\in\cC^\star(\cXsafe)$,
which proves that $\cC^\star(\cXsafe)$ is controlled invariant.

\paragraph{Step 8: $\cC^\star(\cXsafe)$ is maximal.}
Let $\cC\subseteq\cXsafe$ be any controlled-invariant set. We
show by induction that
$
V_H(x)\geq0$
for all $x\in\cC$ and all $H\geq0$.
For $H=0$, we have
$
V_0(x)=\marginfn(x)\geq0$
since $\cC\subseteq\cXsafe$.

Suppose that $V_H(x)\geq0$ for every $x\in\cC$. Fix
$x\in\cC$. Since $\cC$ is controlled invariant, there exists
$u_x\in\cU$ such that
$
f(x,u_x)\in\cC$.
By the induction hypothesis, we have that 
$
V_H\bigl(f(x,u_x)\bigr)\geq0$,
and hence
\[
\max_{u\in\cU}V_H\bigl(f(x,u)\bigr)
\geq
V_H\bigl(f(x,u_x)\bigr)
\geq0.
\]
Also $\marginfn(x)\geq0$, since $x\in\cC\subseteq\cXsafe$.
Therefore
\begin{align*}
V_{H+1}(x)
&=
\min\left\{
\marginfn(x),
\max_{u\in\cU}
V_H\bigl(f(x,u)\bigr)
\right\}\\
&\geq0.
\end{align*}
Thus the induction holds for every $H$.

Taking $H\to\infty$ gives
$V^\star(x)\geq0$
for all $x\in\cC$,
so that
$
\cC\subseteq\cC^\star(\cXsafe)$.
Since $\cC$ was an arbitrary controlled-invariant subset of
$\cXsafe$, $\cC^\star(\cXsafe)$ is the maximal
controlled-invariant subset of $\cXsafe$. This completes the proof. 
\end{proof}
\section{Proof of Risk-Guided Replay Results}\label{app:risk_proofs}
Here we prove the risk-guided replay result Theorem~\ref{thm:boundary-tilting}. 

\begin{proof}[Proof of Theorem~\ref{thm:boundary-tilting}]
Since $b_\phi$ is bounded and $\beta<\infty$, the normalizing
constant
$
Z_\beta
:=
\E_{X\sim\mu}
\left[
    e^{\beta b_\phi(X)}
\right]$
is finite and strictly positive. Hence the tilted distribution is
well defined and satisfies
$
\frac{d\rho_\beta}{d\mu}(x)
=
e^{\beta b_\phi(x)}/Z_\beta$.

Fix $0<\delta<\varepsilon$, and define
\[
A_\varepsilon
:=
\left\{
x:
b_\phi(x)\leq b_\phi^\star-\varepsilon
\right\},
\qquad
B_\delta
:=
\left\{
x:
b_\phi(x)\geq b_\phi^\star-\delta
\right\}.
\]
We first bound the $\rho_\beta$-probability of
$A_\varepsilon$. By the definition of $\rho_\beta$,
\begin{align}
\rho_\beta(A_\varepsilon)
&=
\int_{A_\varepsilon}
\frac{e^{\beta b_\phi(x)}}{Z_\beta}
\,d\mu(x) =
\frac{
    \int_{A_\varepsilon}
    e^{\beta b_\phi(x)}
    \,d\mu(x)
}{
    Z_\beta
}.
\label{eq:tilting-prob-ratio}
\end{align}
For every $x\in A_\varepsilon$, we have
$
b_\phi(x)
\leq
b_\phi^\star-\varepsilon$,
and therefore, since $\beta\geq0$, we have
$
e^{\beta b_\phi(x)}
\leq
e^{\beta(b_\phi^\star-\varepsilon)}$.
Consequently, the bounds hold:
\begin{align}
\int_{A_\varepsilon}
e^{\beta b_\phi(x)}
\,d\mu(x)
&\leq
e^{\beta(b_\phi^\star-\varepsilon)}
\mu(A_\varepsilon) \leq
e^{\beta(b_\phi^\star-\varepsilon)}.
\label{eq:tilting-numerator}
\end{align}

We next lower bound the normalizing constant. Since
$B_\delta$ is a measurable subset of the state space,
\begin{align}
Z_\beta
&=
\int
e^{\beta b_\phi(x)}
\,d\mu(x)\geq
\int_{B_\delta}
e^{\beta b_\phi(x)}
\,d\mu(x).
\label{eq:tilting-Z-restrict}
\end{align}
For every $x\in B_\delta$, we have
$
b_\phi(x)
\geq
b_\phi^\star-\delta$,
so that
$
e^{\beta b_\phi(x)}
\geq
e^{\beta(b_\phi^\star-\delta)}$.
Thus
\begin{align}
Z_\beta
\geq
\int_{B_\delta}
e^{\beta(b_\phi^\star-\delta)}
\,d\mu(x) =
\mu(B_\delta)
e^{\beta(b_\phi^\star-\delta)}.
\label{eq:tilting-denominator}
\end{align}

Moreover, we have that 
$
\mu(B_\delta)>0.
$
Indeed, by definition we have that 
$
b_\phi^\star
=
\operatorname*{ess\,sup}_{X\sim\mu}
b_\phi(X)$.
If $\mu(B_\delta)=0$ for some $\delta>0$, then
$b_\phi(x)<b_\phi^\star-\delta$ for $\mu$-almost every $x$,
which would imply
\[
\operatorname*{ess\,sup}_{X\sim\mu}b_\phi(X)
\leq
b_\phi^\star-\delta
<
b_\phi^\star,
\]
a contradiction. Hence the denominator in the desired bound is
strictly positive.

Substituting \eqref{eq:tilting-numerator} and
\eqref{eq:tilting-denominator} into
\eqref{eq:tilting-prob-ratio} yields
\begin{align}
\rho_\beta(A_\varepsilon)
&\leq
\frac{
e^{\beta(b_\phi^\star-\varepsilon)}
}{
\mu(B_\delta)
e^{\beta(b_\phi^\star-\delta)}
}
=
\frac{
e^{-\beta(\varepsilon-\delta)}
}{
\mu(B_\delta)
}.
\end{align}
Recalling the definitions of $A_\varepsilon$ and $B_\delta$,
we obtain
\[
\rho_\beta
\left(
b_\phi\leq b_\phi^\star-\varepsilon
\right)
\leq
\frac{
e^{-\beta(\varepsilon-\delta)}
}{
\mu
\left(
b_\phi\geq b_\phi^\star-\delta
\right)
}.
\]

Since $\varepsilon-\delta>0$ and
$\mu(B_\delta)>0$ does not depend on $\beta$, the right-hand
side converges to zero exponentially fast as
$\beta\to\infty$. Therefore, for every fixed
$\varepsilon>0$, the tilted distribution assigns vanishing
probability to states whose boundary utility lies at least
$\varepsilon$ below its essential supremum. In this sense,
$\rho_\beta$ concentrates on states with near-maximal boundary
utility as $\beta$ increases.

It remains to establish the support statement. For every finite
$\beta$,
\[
\frac{d\rho_\beta}{d\mu}(x)
=
\frac{e^{\beta b_\phi(x)}}{Z_\beta}
>0
\qquad
\text{$\mu$-almost everywhere}.
\]
Therefore $\rho_\beta\ll\mu$. Conversely, let $A$ be any
measurable set satisfying $\rho_\beta(A)=0$. Then
\[
0
=
\rho_\beta(A)
=
\int_A
\frac{e^{\beta b_\phi(x)}}{Z_\beta}
\,d\mu(x).
\]
The integrand is strictly positive $\mu$-almost everywhere, so
the integral can vanish only if $\mu(A)=0$. Hence
$\mu\ll\rho_\beta$. Thus $\rho_\beta$ and $\mu$ are mutually
absolutely continuous and have exactly the same null sets. In
particular, they have the same support for every finite
$\beta$.
\end{proof}
\section{Proofs of High Probability Safety Guarantees}
\label{app:hp-safety}
In this section we provide the full proofs of the two high probability safety guarantees from Section~\ref{sec:safety-guarantees}.

\subsection{Proof of Theorem~\ref{thm:main-stage-one-safety}}
Let $\mathscr{F}_t$ denote the information available immediately before
sampling the residual at time $t$, including the realized training
outcome, the learned functions $Q_\phi$ and $V_\phi$, and the
closed-loop history up to the current state $x_t$.  We interpret the
assumed action-selection bound conditionally on this information; in
particular, whenever $x_t\in\cC^\star(\cXsafe)$,
\[
\Pr \left(
    Q_\phi(x_t,R_t)
    <
    \classkl\bigl(V_\phi(x_t)\bigr)+\Delta
    \,\middle|\,\mathscr F_t
\right)
\leq\delta,
\]
where $R_t\sim\pi^\Ir(\cdot\mid x_t)$.

For $t=0,\ldots,H-1$, define the action-selection failure event
\[
\mathcal E_t
:=
\left\{
x_t\in\cC^\star(\cXsafe),\;
Q_\phi(x_t,R_t)
<
\classkl\bigl(V_\phi(x_t)\bigr)+\Delta
\right\}.
\]
By the assumed uniform per-state bound,
\begin{align}
\Pr(\mathcal E_t)
&=
\E \left[
    \Pr(\mathcal E_t\mid\mathscr F_t)
\right] \nonumber\\
&=
\E \left[
    \mathbf 1_{\{x_t\in\cC^\star(\cXsafe)\}}
    \Pr \left(
        Q_\phi(x_t,R_t)
        <
        \classkl\bigl(V_\phi(x_t)\bigr)+\Delta
        \,\middle|\,\mathscr F_t
    \right)
\right] \nonumber\\
&\leq
\E \left[
    \mathbf 1_{\{x_t\in\cC^\star(\cXsafe)\}}\delta
\right]
\leq\delta.
\label{eq:stage-one-failure-prob}
\end{align}
No independence between the residual samples at different time steps
is required.

We next establish the deterministic one-step safety implication on the
good certificate event $\mathcal G_Q$.  Fix any
$x\in\cC^\star(\cXsafe)$ and residual $r\in\cR$ satisfying
\begin{equation}
\label{eq:stage-one-success-condition-proof}
Q_\phi(x,r)
\geq
\classkl\bigl(V_\phi(x)\bigr)+\Delta.
\end{equation}
On $\mathcal G_Q$, the uniform critic approximation bound gives us that 
$
Q^\star(x,r)
\geq
Q_\phi(x,r)-\varepsilon_Q$.
Combining this with
\eqref{eq:stage-one-success-condition-proof} yields
\begin{equation}
Q^\star(x,r)
\geq
\classkl\bigl(V_\phi(x)\bigr)
+\Delta-\varepsilon_Q.
\label{eq:stage-one-q-first}
\end{equation}
Moreover, on $\mathcal G_Q$, we have that 
$\left|V_\phi(x)-V^\star(x)\right|
\leq\varepsilon_V$.
Since $\classkl$ is $L_\classkl$-Lipschitz on the relevant value
range,
\begin{align}
\classkl\bigl(V_\phi(x)\bigr)
&\geq
\classkl\bigl(V^\star(x)\bigr)
-
L_\classkl
\left|V_\phi(x)-V^\star(x)\right| \geq
\classkl\bigl(V^\star(x)\bigr)
-
L_\classkl\varepsilon_V.
\label{eq:stage-one-alpha-bound}
\end{align}
Substituting \eqref{eq:stage-one-alpha-bound} into
\eqref{eq:stage-one-q-first} gives us that 
\begin{align}
Q^\star(x,r)
&\geq
\classkl\bigl(V^\star(x)\bigr)
+
\Delta
-
\varepsilon_Q
-
L_\classkl\varepsilon_V \nonumber\\
&\geq
\classkl\bigl(V^\star(x)\bigr),
\label{eq:stage-one-exact-admissible}
\end{align}
where the last inequality follows from
$
\Delta\geq
\varepsilon_Q+L_\classkl\varepsilon_V.
$
Thus every residual satisfying the tightened learned Q-CBF condition
on $\mathcal G_Q$ belongs to the exact Q-CBF admissible set
$\cR^\star_{\mathrm{QCBF}}(x)$.

We now show that such an action preserves
$\cC^\star(\cXsafe)$.  By definition of the exact state--action
safety value,
\[
Q^\star(x,r)
=
\min\left\{
    g(x),\,
    V^\star\bigl(f_{\rm res}(x,r)\bigr)
\right\}.
\]
Let $x^+=f_{\rm res}(x,r)$.  Since
$x\in\cC^\star(\cXsafe)$, we have $V^\star(x)\geq0$, and since
$\classkl$ is class-$\cK$ on $[0,\infty)$,
$\classkl(V^\star(x))\geq0$.  From
\eqref{eq:stage-one-exact-admissible}, we have that
$
\min\left\{
    g(x),V^\star(x^+)
\right\}
\geq
\classkl\bigl(V^\star(x)\bigr)$.
In particular, we have the bounds
$
V^\star(x^+)
\geq
\classkl\bigl(V^\star(x)\bigr)
\geq0$.
Hence $x^+\in\cC^\star(\cXsafe)$.  Therefore, on $\mathcal G_Q$,
any time $x_t\in\cC^\star(\cXsafe)$ and
$\mathcal E_t$ does not occur, we have
$x_{t+1}\in\cC^\star(\cXsafe)$.

Consider now the event
\[
\mathcal A_H
:=
\mathcal G_Q
\cap
\bigcap_{t=0}^{H-1}\mathcal E_t^c.
\]
We claim that on $\mathcal A_H$,
\[
x_t\in\cC^\star(\cXsafe)
\qquad
\text{for every }t=0,\ldots,H.
\]
This follows by induction.  The claim holds for $t=0$ by assumption.
Suppose it holds at time $t<H$.  Then
$x_t\in\cC^\star(\cXsafe)$.  Since $\mathcal E_t^c$ occurs, the
sampled residual satisfies
$
Q_\phi(x_t,R_t)
\geq
\classkl\bigl(V_\phi(x_t)\bigr)+\Delta$.
Since $\mathcal G_Q$ also occurs, the one-step argument above gives
$x_{t+1}\in\cC^\star(\cXsafe)$.  This completes the induction.

Consequently, we have that 
\[
\left\{
x_t\in\cC^\star(\cXsafe)
\ \forall\,t=0,\ldots,H
\right\}^c
\subseteq
\mathcal G_Q^c
\cup
\bigcup_{t=0}^{H-1}\mathcal E_t.
\]
Applying the union bound, using
$\Pr(\mathcal G_Q^c)\leq\delta_Q$ and
\eqref{eq:stage-one-failure-prob}, yields
\begin{align*}
\Pr \left(
    \exists\,t\leq H:
    x_t\notin\cC^\star(\cXsafe)
\right)
&\leq
\Pr(\mathcal G_Q^c)
+
\sum_{t=0}^{H-1}\Pr(\mathcal E_t)\leq
\delta_Q+H\delta.
\end{align*}
Taking complements proves
$
\Pr \left(
    x_t\in\cC^\star(\cXsafe)
    \ \forall\,t=0,\ldots,H
\right)
\geq
1-\delta_Q-H\delta$.
Finally,
$\cC^\star(\cXsafe)\subseteq\cXsafe$, while
$\cF=\{x:g(x)<0\}$ is disjoint from $\cXsafe$.  Therefore
\[
\left\{
\exists\,t\leq H:x_t\in\cF
\right\}
\subseteq
\left\{
\exists\,t\leq H:
x_t\notin\cC^\star(\cXsafe)
\right\},
\]
and hence
$
\Pr \left(
    \exists\,t\leq H:x_t\in\cF
\right)
\leq
\min\{1,\delta_Q+H\delta\}$.

If a deterministic Q-CBF filter is feasible and enforces the
tightened learned condition at every
$x\in\cC^\star(\cXsafe)$, then the events
$\mathcal E_t$ are empty.  Equivalently, $\delta=0$, and the bound
reduces to $1-\delta_Q$, which concludes the proof.

\subsection{Proof of Theorem~\ref{thm:main-stage-two-safety}}
Let $\mathscr F_t$ again denote the information available immediately
before sampling the Stage~II residual at time $t$, including the
realized training outcome, the learned functions $Q_\phi$ and
$V_\phi$, and the trajectory history up to $x_t$.  We interpret the
assumed distillation bound conditionally on this information; thus,
whenever $x_t\in\cC^\star(\cXsafe)$,
\[
\Pr \left(
    \left\|
    R_t-
    \rteach_{\phi,\Delta}(x_t,R_t)
    \right\|
    >
    \varepsilon_\pi
    \,\middle|\,\mathscr F_t
\right)
\leq\delta_\pi,
\]
where
$R_t\sim\pitask_\theta(\cdot\mid x_t)$ is the residual actually
executed by the filter-free Stage~II policy.

For each $t=0,\ldots,H-1$, define the distillation-failure event
\[
\mathcal E_t^\pi
:=
\left\{
x_t\in\cC^\star(\cXsafe),\;
\left\|
R_t-
\rteach_{\phi,\Delta}(x_t,R_t)
\right\|
>
\varepsilon_\pi
\right\}.
\]
The assumed uniform per-state bound implies
\begin{align}
\Pr(\mathcal E_t^\pi)
&=
\E \left[
    \Pr(\mathcal E_t^\pi\mid\mathscr F_t)
\right]\leq
\E \left[
    \mathbf 1_{\{x_t\in\cC^\star(\cXsafe)\}}
    \delta_\pi
\right]
\leq\delta_\pi.
\label{eq:stage-two-failure-prob}
\end{align}
Again, no independence assumption across time is required.

We first establish the one-step implication on the certificate-good
event $\mathcal G_Q$.  Fix
$x\in\cC^\star(\cXsafe)$ and let
$r\in\cR$ be a task-policy proposal satisfying
\[
\left\|
r-\rteach_{\phi,\Delta}(x,r)
\right\|
\leq\varepsilon_\pi.
\]
For brevity, write
$
\bar r
:=
\rteach_{\phi,\Delta}(x,r)$.
By feasibility of the teacher optimization problem and the constraint
in \eqref{eq:main-stage-two-teacher}, we have that
$c_{\phi,\Delta}(x,\bar r)\leq0$.
Using
\[
c_{\phi,\Delta}(x,\bar r)
=
\classkl\bigl(V_\phi(x)\bigr)
+\Delta
-
Q_\phi(x,\bar r),
\]
this is equivalent to
\begin{equation}
Q_\phi(x,\bar r)
\geq
\classkl\bigl(V_\phi(x)\bigr)+\Delta.
\label{eq:stage-two-teacher-feasible}
\end{equation}
On $\mathcal G_Q$, the critic approximation bound gives
\begin{equation}
Q^\star(x,\bar r)
\geq
Q_\phi(x,\bar r)-\varepsilon_Q.
\label{eq:stage-two-critic-transfer}
\end{equation}
Since $Q^\star(x,\cdot)$ is $L_Q$-Lipschitz continuous, we have that 
\begin{align}
Q^\star(x,r)
&\geq
Q^\star(x,\bar r)
-
L_Q\|r-\bar r\| \geq
Q^\star(x,\bar r)
-
L_Q\varepsilon_\pi.
\label{eq:stage-two-policy-transfer}
\end{align}
Combining
\eqref{eq:stage-two-teacher-feasible}--%
\eqref{eq:stage-two-policy-transfer} yields
\begin{align}
Q^\star(x,r)
&\geq
Q_\phi(x,\bar r)
-
\varepsilon_Q
-
L_Q\varepsilon_\pi \geq
\classkl\bigl(V_\phi(x)\bigr)
+
\Delta
-
\varepsilon_Q
-
L_Q\varepsilon_\pi.
\label{eq:stage-two-before-v-error}
\end{align}
On $\mathcal G_Q$, we have that 
$
|V_\phi(x)-V^\star(x)|
\leq\varepsilon_V$.
Therefore, by the $L_\classkl$-Lipschitz continuity of $\classkl$, we have 
\begin{align}
\classkl\bigl(V_\phi(x)\bigr)
&\geq
\classkl\bigl(V^\star(x)\bigr)
-
L_\classkl
|V_\phi(x)-V^\star(x)| \nonumber\\
&\geq
\classkl\bigl(V^\star(x)\bigr)
-
L_\classkl\varepsilon_V.
\label{eq:stage-two-value-transfer}
\end{align}
Substituting \eqref{eq:stage-two-value-transfer} into
\eqref{eq:stage-two-before-v-error} yields
\begin{align}
Q^\star(x,r)
&\geq
\classkl\bigl(V^\star(x)\bigr)
+
\Delta
-
\varepsilon_Q
-
L_\classkl\varepsilon_V
-
L_Q\varepsilon_\pi \geq
\classkl\bigl(V^\star(x)\bigr),
\label{eq:stage-two-exact-admissible}
\end{align}
where the final inequality follows from the assumed tightening
\[
\Delta
\geq
\varepsilon_Q+
L_\classkl\varepsilon_V+
L_Q\varepsilon_\pi.
\]
Thus the filter-free task action itself satisfies the exact Q-CBF
condition---namely, 
$
r\in\cR^\star_{\mathrm{QCBF}}(x)$.

We next verify that this exact admissibility condition preserves the
maximal controlled-invariant safe set.  Let
$
x^+=f_{\rm res}(x,r)$.
By definition,
$
Q^\star(x,r)
=
\min\left\{
g(x),V^\star(x^+)
\right\}$.
Since $x\in\cC^\star(\cXsafe)$,
$V^\star(x)\geq0$.  Hence
$\classkl(V^\star(x))\geq0$, and
\eqref{eq:stage-two-exact-admissible} implies
\[
V^\star(x^+)
\geq
Q^\star(x,r)
\geq
\classkl\bigl(V^\star(x)\bigr)
\geq0.
\]
Therefore
\[
x^+\in\cC^\star(\cXsafe).
\]
We have consequently shown that, on $\mathcal G_Q$, whenever
$x_t\in\cC^\star(\cXsafe)$ and $\mathcal E_t^\pi$ does not occur,
the actually executed Stage~II action $R_t$ keeps the next state in
$\cC^\star(\cXsafe)$.

Now define
\[
\mathcal A_H^\pi
:=
\mathcal G_Q
\cap
\bigcap_{t=0}^{H-1}
(\mathcal E_t^\pi)^c.
\]
On $\mathcal A_H^\pi$, we prove by induction that
\[
x_t\in\cC^\star(\cXsafe)
\qquad
\forall\,t=0,\ldots,H.
\]
The base case holds because
$x_0\in\cC^\star(\cXsafe)$ by assumption.  Suppose the claim holds
at time $t<H$.  Then $x_t\in\cC^\star(\cXsafe)$, and because
$(\mathcal E_t^\pi)^c$ occurs, we have the bound
\[
\left\|
R_t-
\rteach_{\phi,\Delta}(x_t,R_t)
\right\|
\leq\varepsilon_\pi.
\]
Together with $\mathcal G_Q$, the one-step argument above implies
$x_{t+1}\in\cC^\star(\cXsafe)$.  This proves the induction claim.

It follows that
\[
\left\{
x_t\in\cC^\star(\cXsafe)
\ \forall\,t=0,\ldots,H
\right\}^c
\subseteq
\mathcal G_Q^c
\cup
\bigcup_{t=0}^{H-1}
\mathcal E_t^\pi.
\]
Using the union bound,
$\Pr(\mathcal G_Q^c)\leq\delta_Q$, and
\eqref{eq:stage-two-failure-prob}, we have that 
\begin{align*}
\Pr \left(
    \exists\,t\leq H:
    x_t\notin\cC^\star(\cXsafe)
\right)
&\leq
\Pr(\mathcal G_Q^c)
+
\sum_{t=0}^{H-1}
\Pr(\mathcal E_t^\pi)\leq
\delta_Q+H\delta_\pi.
\end{align*}
Taking complements proves
\[
\Pr \left(
    x_t\in\cC^\star(\cXsafe)
    \ \forall\,t=0,\ldots,H
\right)
\geq
1-\delta_Q-H\delta_\pi.
\]
Finally, since
$\cC^\star(\cXsafe)\subseteq\cXsafe$ and
$\cF\cap\cXsafe=\varnothing$,
\[
\left\{
\exists\,t\leq H:x_t\in\cF
\right\}
\subseteq
\left\{
\exists\,t\leq H:
x_t\notin\cC^\star(\cXsafe)
\right\}.
\]
Therefore
\[
\Pr \left(
    \exists\,t\leq H:x_t\in\cF
\right)
\leq
\min\{1,\delta_Q+H\delta_\pi\},
\]
which completes the proof.

\end{document}